%% file: paper.tex
\documentclass[]{mosi}

\usepackage{helvet}
\usepackage{times}
\usepackage{latexsym}

\usepackage{amsmath}
\usepackage{amssymb}
\usepackage{amsfonts}

\usepackage{amsthm}
\usepackage{natbib}

\usepackage[T1]{fontenc}
\usepackage[utf8]{inputenc}

\usepackage{microtype}
\usepackage{inconsolata}
\usepackage{nicefrac}

\usepackage{graphicx}
\usepackage{subcaption}
\usepackage[export]{adjustbox}
\usepackage{xcolor}
\usepackage[table]{xcolor}
\usepackage{colortbl}
\usepackage{booktabs}
\usepackage{multirow}
\usepackage{makecell}
\usepackage{array}
\usepackage{tabularx}
\usepackage{threeparttable}
\usepackage{siunitx}
\usepackage{float}
\usepackage{wrapfig}
\usepackage{subcaption}

\usepackage[toc,page,header]{appendix}
\usepackage{hyperref}
\usepackage{url}
\usepackage{titletoc}
\usepackage{minitoc}
\usepackage{ragged2e}
\usepackage{caption}
\usepackage{enumitem}
\usepackage[hang,flushmargin]{footmisc}
\usepackage{setspace}
\usepackage{etoolbox}
\usepackage{comment}

\usepackage{pgfplots}
\usepackage{tikz}
\usepackage{pgf-pie}

\usepackage{tcolorbox}
\tcbuselibrary{breakable,skins}
\usepackage{fvextra}
\usepackage{fancyvrb}
\usepackage{listings}
\usepackage{mdframed}

\usepackage{pifont}

\definecolor{lightblue}{RGB}{200, 230, 255}
\definecolor{headerblue}{RGB}{150, 200, 255}
\definecolor{oursgray}{gray}{0.95}

\definecolor{MossCyan}{HTML}{82D9FF}
\definecolor{MossBlue}{HTML}{82B1FF}

\definecolor{tickG}{HTML}{00C853}
\definecolor{crossR}{HTML}{FF1744}

\newtheorem{theorem}{Theorem}

\newcommand{\faGithub}{\raisebox{-0.2ex}{\includegraphics[height=2.0ex]{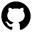}}}

\newcommand{\methodname}{\textsc{MEDS}}

\newcommand{\best}[1]{\textbf{#1}}
\newcommand{\second}[1]{\underline{#1}}

\newtcolorbox{promptbox}[2][]{
    colback=white,
    coltext=black,
    arc=3mm,
    boxrule=0.5pt,
    colframe=black!60!white,
    title={#2},
    colbacktitle=black,
    coltitle=white,
    fonttitle=\bfseries,
    top=8pt,
    bottom=8pt,
    left=10pt,
    right=10pt,
    breakable,
    before upper={%
        \linespread{1}\selectfont
        \setlength{\parskip}{1ex plus 0.2ex minus 0.2ex}%
        \setlength{\parindent}{0pt}%
    },
    #1
}

\title{The Past Is Not Past: Memory-Enhanced Dynamic Reward Shaping}

\author{
Yang Liu$^{1,2,*}$,
Enxi Wang$^{1,*}$,
Yufei Gao$^{1,*}$,
Weixin Zhang$^{1,*}$,
Bo Wang$^{1}$,
Zhiyuan Zeng$^{1}$,
Yikai Zhang$^{1}$,
Yining Zheng$^{1,2,\dagger}$,
Xipeng Qiu$^{1,2,\dagger}$\\[2mm]
{\normalfont \normalsize $^{1}$Fudan University}\\
{\normalfont \normalsize $^{2}$Shanghai Innovation Institute}
}

\abstract{
Despite the success of reinforcement learning for large language models, a common failure mode is reduced sampling diversity, where the policy repeatedly generates similar erroneous behaviors. Classical entropy regularization encourages randomness under the current policy, but does not explicitly discourage recurrent failure patterns across rollouts. We propose \textbf{\methodname}, a \textbf{M}emory-\textbf{E}nhanced \textbf{D}ynamic reward \textbf{S}haping framework that incorporates historical behavioral signals into reward design.
By storing and leveraging intermediate model representations, we capture features of past rollouts and use density-based clustering to identify frequently recurring error patterns. Rollouts assigned to more prevalent error clusters are penalized more heavily, encouraging broader exploration while reducing repeated mistakes. Across five datasets and three base models, \methodname~consistently improves average performance over existing baselines, achieving gains of up to 4.13 pass@1 points and 4.37 pass@128 points. Additional analyses using both LLM-based annotations and quantitative diversity metrics show that \methodname~increases behavioral diversity during sampling.
}

\checkdata[\raisebox{-0.1ex}{\faGithub}\hspace{0.4em}Github]{\url{https://github.com/Linxi000/MEDS}}

\begin{document}
\maketitle
\begingroup
\renewcommand{\thefootnote}{\fnsymbol{footnote}}
\setcounter{footnote}{0}
\footnotetext[1]{These authors contributed equally.}
\footnotetext[2]{Corresponding authors.}
\endgroup

\input{sections/introduction}
\input{sections/related_work}

\input{sections/method}
\input{sections/experiments}

\input{sections/conclusion}
\input{sections/limitations}
\bibliographystyle{unsrtnat}
\bibliography{main}
\clearpage
\appendix
\input{sections/appendix}

\end{document}

%% file: sections/introduction.tex
\section{Introduction}

\begin{figure}[t]
\centering

\begin{subfigure}[t]{0.482\linewidth}
    \centering
    \includegraphics[width=\linewidth]{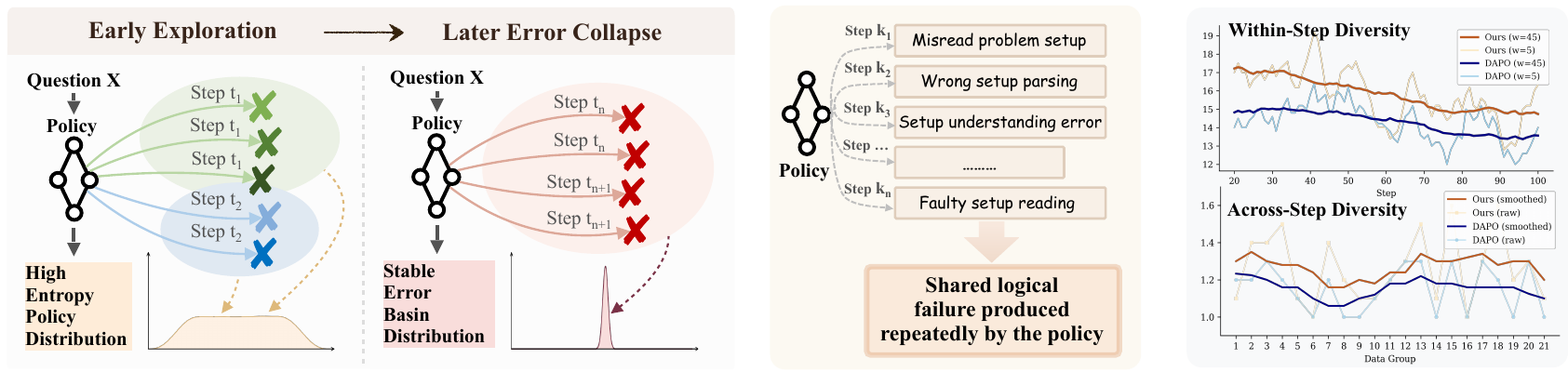}
    \caption{}
    \label{fig:problem-a}
\end{subfigure}
\hfill
\begin{subfigure}[t]{0.247\linewidth}
    \centering
    \includegraphics[width=\linewidth]{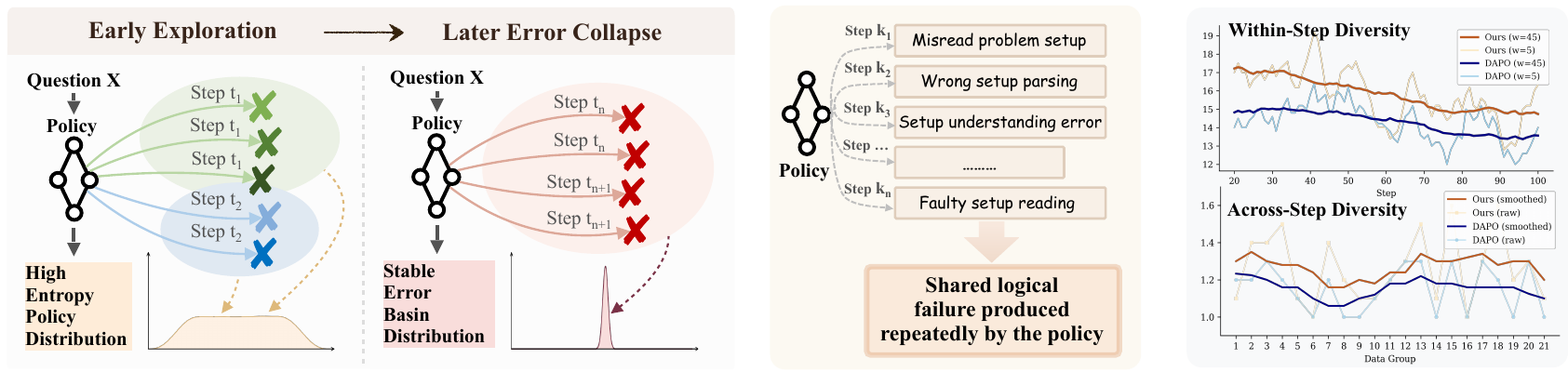}
    \caption{}
    \label{fig:problem-b}
\end{subfigure}
\hfill
\begin{subfigure}[t]{0.257\linewidth}
    \centering
    \includegraphics[width=\linewidth]{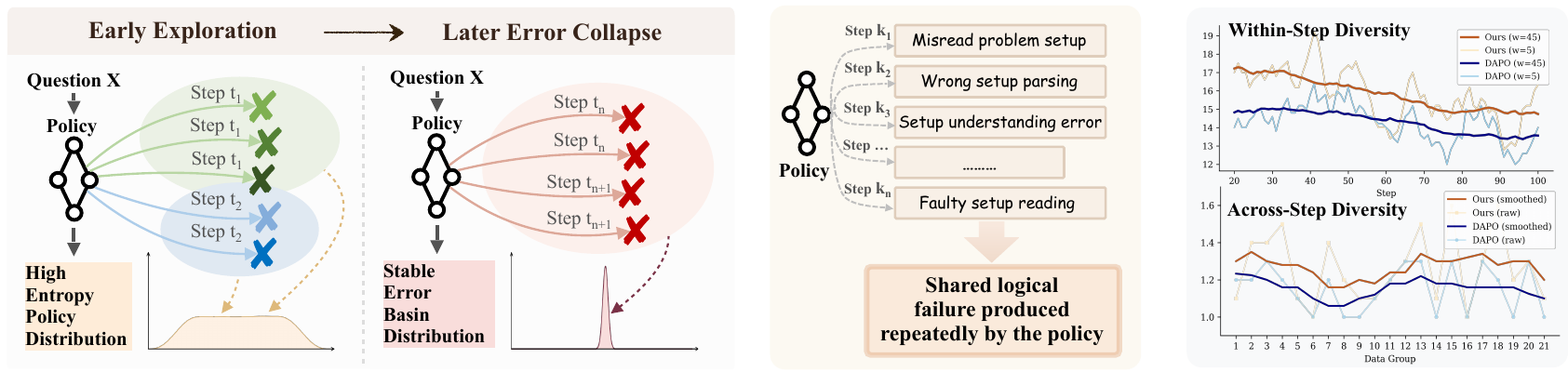}
    \caption{}
    \label{fig:problem-c}
\end{subfigure}

\caption{Illustration of error collapse in on-policy optimization and its effect on output diversity. \textbf{(a)} During on-policy optimization, the policy evolves from a high-entropy distribution into a narrow, stable error basin.
\textbf{(b)} Different errors share the same failure mode and become progressively reinforced during training.
\textbf{(c)} Comparison of output diversity between DAPO and our method, with detailed description provided in Section~\ref{exp_exploration}.}

\label{fig:problem}
\end{figure}

With the advancement of the fundamental capabilities of Large Language Models (LLMs), reinforcement learning has achieved significant success across various domains~\citep{grpo, StepCoder, ExecutionCodeGeneration}. By incorporating reward signals, whether derived from rule-based evaluation~\citep{grpo} or proxy models~\citep{InstructGPT}, LLMs iteratively alternate between a sampling phase and a gradient-based optimization phase. As the model’s performance is optimized toward maximizing the expected return, designing the reward scoring structure becomes the primary way to guide the model's behavior~\citep{allroads2likelihood}.

A well-known failure mode of such on-policy optimization is that, as training progresses, the policy often collapse into a narrow and stereotyped set of behaviors~\citep{DAPO,ips,EEPO}. This degeneration produces highly repetitive responses that not only weaken effective exploration and waste on-policy samples, but also entrench the model in self-reinforcing erroneous reasoning trajectories\citep{negative_cdq}. A series of works~\citep{HaarnojaZAL18,wan2026dsdr,entropy-2016} introduce entropy regularization to promote output diversity by injecting randomness at the level of the current policy distribution. However, in LLM reasoning, such distribution-level stochastic exploration is often insufficient due to the vast action space. Moreover, as illustrated in Figure~\ref{fig:problem}, once training drifts into a particular error pattern, the policy repeatedly reproduce the same mistake by sampling slightly different reasoning expressions, remaining trapped in this pattern despite continued optimization over many steps.

Inspired by the dynamic reward mechanisms observed in human learning \citep{neural_error}, where individuals impose stronger psychological penalties on recurring mistakes to reinforce avoidance, we propose a \textbf{M}emory-\textbf{E}nhanced \textbf{D}ynamic reward \textbf{S}haping framework (\methodname). Unlike static reward functions, our method dynamically records historical error patterns and imposes incremental penalties on repetitive failure paths. This mechanism not only facilitates the correction of persistent errors but also encourages the model to escape local optima, thereby enhancing exploration in complex solution spaces. We further provide theoretical analysis showing that additional penalties on repeated errors can improve performance.

In practice, by reusing the layer-wise logits already produced during the forward pass, we construct feature vectors of the model's implicit reasoning trajectory. This reuse introduces almost no additional computational overhead and serves as a lightweight yet effective representation. We then maintain an error memory that is updated online throughout training. By incorporating HDBSCAN-based clustering~\citep{hdbscan}, this memory adaptively captures error types and their density structure, enabling the training objective to adjust penalty strength according to the recurrence of failure patterns. 

Empirical results across multiple benchmarks show that our method improves both pass@1 and pass@128 across all three models, with gains of up to 4.13 points in pass@1 and 4.37 points in pass@128. The largest pass@128 improvement is observed on Qwen3-8B, where performance increases from 70.81 to 82.67 on a single dataset, corresponding to a relative gain of 17\%. We also analyze changes in the model's exploration behavior, demonstrating that our method increases the diversity of exploration. In addition, we calculate the consistency between logit-based clustering and LLM-based clustering, showing that logits can serve as a proxy for reasoning structure. The positive correlation between clustering quality and downstream model performance further validates the effectiveness of our approach. Our main contributions are summarized as follows:





\begin{enumerate}
\item We propose \methodname, a novel memory-enhanced dynamic reward shaping framework for LLM reinforcement learning. To the best of our knowledge, this is the first work to explicitly incorporate historical error patterns into reward modeling, enabling the policy to recognize and avoid recurrent failure behaviors rather than merely increasing randomness under the current policy. This design provides more informative supervision during training and effectively mitigates repeated errors.

\item We provide theoretical analysis supporting the effectiveness of memory-based dynamic penalties in improving learning and mitigating recurrent failure modes.

\item We introduce a representation-level criterion for response similarity that leverages layer-wise logits as a compact and efficient proxy, with little additional computational overhead.

\item We conduct extensive experiments across multiple datasets and models, showing that \methodname~consistently achieves the best overall performance and substantially improves both pass@1 and pass@128. Further analyses show that our method increases exploration diversity, that logit-based clustering aligns well with LLM-based semantic clustering, and that better recognition of repeated error patterns is strongly correlated with downstream performance gains.
\end{enumerate}

%% file: sections/related_work.tex
\section{Related Work}
\subsection{Reinforcement Learning with Verifiable Rewards}
The application of reinforcement learning in LLMs relies heavily on the design of reward signals. And Reinforcement Learning with Verifiable Rewards (RLVR) has become the dominant paradigm due to the simplicity of its reward signals.
Representative works adopt diverse verifiable criteria across tasks. \citet{mathvarify} and \citet{grpo} assign binary rewards based on exact answer matching. \citet{zhu2025codevr1} automatically verifies correctness by executing generated programs in a sandbox. \citet{ThinkingWithVideos} uses both temporal localization quality and multiple-choice Video QA accuracy as reward signals. \citet{chen2025learning} employs a multimodal large language model (MLLM) to evaluate similarity between generated and target images. \citet{RERANK} adopts metrics such as nDCG@10 as the primary reward.
In all these works, the verifier operates independently within each epoch and assigns rewards based solely on current samples, overlooking useful information in historical trajectories. Our approach instead leverages trajectories from earlier stages to guide later sampling.

\subsection{Reward Shaping for Diversity Exploration}
\label{sec:relatedwork-rewardshaping}
During RLVR training, policy updates tend to over-optimize dominant solution patterns, leading to collapse of reasoning diversity~\citep{LightmanKBEBLLS24,outcome-based-feedback}. Traditional approaches mitigate this by constraining policy shift via KL regularization~\citep{InstructGPT,ppo} or encouraging randomness via entropy regularization~\citep{HaarnojaZAL18}.

Beyond implicit regularization, several works explicitly reward rare reasoning behaviors. \citet{Rewarding-the-Rare} clusters rollouts by solution strategy and inversely weights advantages by cluster size. \citet{wan2026dsdr} promotes diversity among correct trajectories with token-level entropy regularization. However, these methods operate only on current-batch differences and do not track solution patterns over long-term training history.


To leverage historical information, \citet{Outcome-based-Exploration} introduces UCB-style exploration rewards based on historical outcome frequencies. Yet it still lacks fine-grained cross-step reasoning behavior tracking. Another line of work~\citep{Auto-MC-Reward,Multi-Objective-Evolution,LatentReward,REvolve,Eureka} uses external models to adaptively adjust rewards based on historical performance, but the additional computational cost limits scalability. In contrast, our method captures reasoning patterns via layer-wise logits, enabling history-aware reward adjustment with minimal overhead.



\subsection{Inner Thoughts of LLMs}
Works that attempt to understand and utilize how models reason and arrive at their final conclusions are an important source of inspiration for our approach. \citet{modeledit} localizes factual knowledge in the model by examining logit changes. 
Subsequent studies~\citep{anthropicgoldengate, anthropicinductionhead, lorsa, llamascope} use Sparse Autoencoders (SAEs) and related logit-intervention techniques to obtain a more fine-grained understanding of LLM reasoning processes.
These efforts have further motivated the use of logits for analysis and supervision. For example, \citet{VerifyCOT} detects erroneous reasoning by verifying the inner attribute graph, while \citet{BPO} leverages layer-wise logits to provide more fine-grained supervision.

A key component of our work is to identify shared patterns across different reasoning trajectories. Inspired by prior work on logit-based model analysis, we cluster logits to capture such patterns without requiring extra models. We describe this approach in detail in Section~\ref{method:logits_cluster}.

%% file: sections/method.tex
\section{Method}
\subsection{Preliminaries}
Given an input  $\mathbf{x} \sim D$, the LLM is modeled as a policy $p_\theta$ that generates a response sequence  $ \mathbf{y} \sim p_\theta(\mathbf{y}\mid \mathbf{x})$, where  \( \mathbf{y} = \{y_1, y_2, \ldots, y_L\} \in \mathcal{Y} \) represents the complete response and \( y_t \) denotes the token generated at step \(t\). The response is evaluated and mapped to a scalar reward via a reward function 
\( r : \mathcal{X} \times \mathcal{Y} \rightarrow \mathbb{R} \). In practical optimization, a KL-regularized objective is adopted, and we maximize the expected return within a trust region. The corresponding training objective is
\begin{equation*}
\begin{aligned}
\max_{\theta}\ \mathbb{E}_{\mathbf{x}\sim D}\Big[
&\mathbb{E}_{\mathbf{y} \sim p_\theta(\cdot \mid \mathbf{x})} \big[r(\mathbf{x}, \mathbf{y})\big] - \frac{1}{\eta}\,\mathrm{KL}\!\big(p_\theta(\cdot \mid \mathbf{x})\,\|\,p_{\mathrm{ref}}(\cdot \mid \mathbf{x})\big)
\Big]
\end{aligned}
\end{equation*}
where \( p_{\mathrm{ref}} \) is the reference policy that constrains the update space, and \( \eta \) is the regularization coefficient controlling the strength of the constraint. 

\begin{figure*}[t]
\centering
\includegraphics[width=\textwidth]{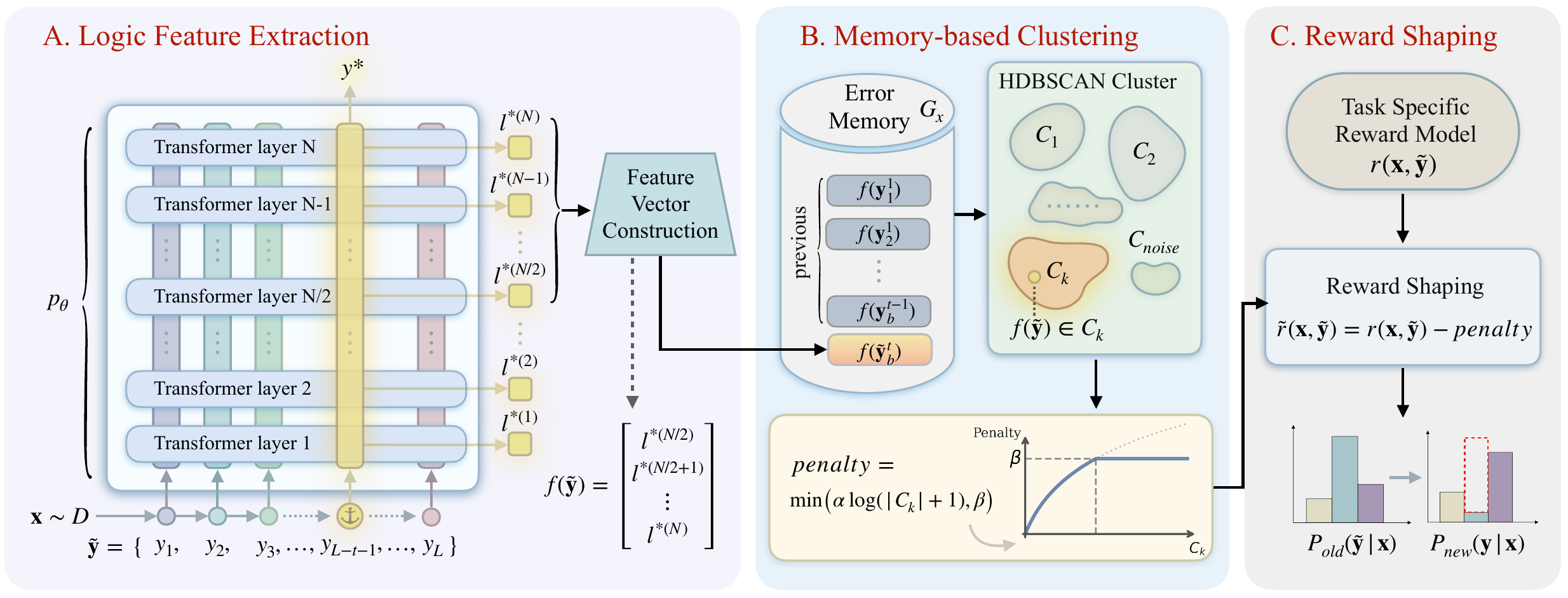}  
\caption{\textbf{Overview of \methodname.} The process consists of three stages: \textbf{(A) Logic Feature Extraction:} For an input \(x\), we sample a response \(\tilde{y}\) and extract a logic feature vector \(f(\tilde{y})\) from the layer-wise logits of the first token \(y^*\) in the final answer, using the latter half of the Transformer layers to capture reasoning patterns. \textbf{(B) Memory-based Clustering:} We maintain a per-prompt error memory \(G_x\) and apply HDBSCAN to group historical responses with similar logic features. \textbf{(C) Reward Shaping:} For a response assigned to cluster \(C_k\), we compute a penalty \(\min(\alpha \log(|C_k|+1), \beta)\) proportional to the cluster size, and adjust the reward as \(\tilde{r}(x,\tilde{y}) = r(x,\tilde{y}) - \text{penalty}\), which is then used to update the policy.}

\label{fig:overview}
\vspace{-2pt}
\end{figure*}

\subsection{Theoretical Benefits of Penalizing Repeated Errors}

With the mathematical formulation above, we theoretically analyze the benefits of penalizing repeated errors.

\begin{theorem}
Suppose there are two different reward signals, $\mu_1 = r(\mathbf{x}, \mathbf{y})$ and $\mu_2 = r(\mathbf{x}, \mathbf{y}) - \lambda c(\mathbf{y})$, and let $q_1$ and $q_2$ denote the corresponding updated policies under these reward signals. Let $J(q) = \mathbb{E}_{\mathbf{x}, \mathbf{y} \sim q}[r(\mathbf{x}, \mathbf{y})]$. Then it holds that
\begin{align*}
J(q_2) \geq J(q_1),
\end{align*}
where $\lambda$ is the repetition penalty coefficient. $c(\mathbf{y}): X \times Y \rightarrow \mathbb{N}$ denotes the indicator of the number of repeated errors, where a value greater than zero indicates that $\mathbf{y}$ corresponds to an error that has been repeatedly sampled in the past. The function $r(\mathbf{x}, \mathbf{y})$ is a task-specific scoring function. The function \(c(\mathbf{y})\) increases monotonically with the number of occurrences.
\end{theorem}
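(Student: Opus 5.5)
The plan is to make precise what ``updated policy'' means, using the KL-regularized objective of the Preliminaries. For a fixed $\mathbf{x}$ and a generic reward $\mu$, the maximizer of $\mathbb{E}_{\mathbf{y}\sim q}[\mu(\mathbf{x},\mathbf{y})]-\frac{1}{\eta}\mathrm{KL}(q\,\|\,p_{\mathrm{ref}})$ has the Gibbs form
\begin{equation*}
q_\mu(\mathbf{y}\mid\mathbf{x})=\frac{p_{\mathrm{ref}}(\mathbf{y}\mid\mathbf{x})\exp\bigl(\eta\,\mu(\mathbf{x},\mathbf{y})\bigr)}{Z_\mu(\mathbf{x})}.
\end{equation*}
This follows from the Gibbs variational principle, i.e.\ nonnegativity of KL against the tilted distribution. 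I take $q_1=q_{\mu_1}$ and $q_2=q_{\mu_2}$. Since $J$ is an expectation over $\mathbf{x}\sim D$, it suffices to prove the inequality pointwise in $\mathbf{x}$, so I fix $\mathbf{x}$ from here on.

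Next I embed both policies in the one-parameter family $q_s\propto p_{\mathrm{ref}}\exp\bigl(\eta(r-s\,c)\bigr)$ for $s\in[0,\lambda]$, so that $q_0=q_1$ and $q_\lambda=q_2$. Set $j(s)=\mathbb{E}_{q_s}[r]$. The standard exponential-family identity for the derivative of an expectation with respect to a natural parameter gives
\begin{equation*}
j'(s)=-\eta\,\mathrm{Cov}_{q_s}\bigl(r,c\bigr).
\end{equation*}
The claim $J(q_2)\ge J(q_1)$ then follows by integrating $j'$ over $[0,\lambda]$, provided $\mathrm{Cov}_{q_s}(r,c)\le 0$ for every $s$. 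Finiteness of the response space, or bounded $r$ and $c$, justifies the differentiation.

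The main obstacle is that the covariance step is false for arbitrary $c$. The statement only says that $c$ counts repeated \emph{errors}, so I would make this explicit as the structural assumption of the RLVR setting. Concretely, $r$ takes its minimal value $r_{\min}$ on erroneous responses, for instance $r\in\{0,1\}$, and $c(\mathbf{y})\ge 0$ with $c(\mathbf{y})=0$ whenever $\mathbf{y}$ is correct. Under this assumption $(r-r_{\min})\,c\equiv 0$, and therefore
\begin{equation*}
\mathrm{Cov}_{q_s}(r,c)=\mathbb{E}_{q_s}\bigl[(r-r_{\min})c\bigr]-\mathbb{E}_{q_s}[r-r_{\min}]\,\mathbb{E}_{q_s}[c]=-\mathbb{E}_{q_s}[r-r_{\min}]\,\mathbb{E}_{q_s}[c]\le 0.
\end{equation*}
Hence $j$ is nondecreasing on $[0,\lambda]$, which gives $J(q_2)\ge J(q_1)$.

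The monotonicity of $c$ in the number of occurrences is not needed for the inequality itself, only $c\ge 0$ and its support on errors. I would note, however, that strict improvement holds whenever both $\mathbb{E}[r-r_{\min}]>0$ and $\mathbb{E}[c]>0$ under some $q_s$. Finally, I would remark that the argument extends to graded rewards if errors always score below correct answers and $c$ vanishes on correct answers: the same decomposition, conditioning on correctness, yields a nonpositive covariance.
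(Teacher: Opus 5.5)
Your main argument is correct, but it takes a different route from the paper. The paper also uses the Gibbs form, but then writes $q_2=q_1w/\mathbb{E}_{q_1}[w]$ with $w=\exp(-\eta\lambda c)$. This gives in one step $J_{\mathbf{x}}(q_2)-J_{\mathbf{x}}(q_1)=\mathrm{Cov}_{q_1}(r,w)/\mathbb{E}_{q_1}[w]$, and the paper signs this covariance by Chebyshev's inequality. For that it \emph{postulates} that $r$ is non-increasing in $c$; the form actually used is that $\mathbb{E}_{q_1}[r\mid c=k]$ is non-increasing in $k$. You instead interpolate in the penalty strength and need $\mathrm{Cov}_{q_s}(r,c)\le 0$ along the whole path, which you get from the zero-product identity $(r-r_{\min})c\equiv 0$.

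Your hypothesis comes from the statement itself ($c>0$ only on errors) together with the convention that errors get the minimal reward. It holds under every distribution, gives an explicit strictness criterion, and implies the paper's conditional-mean condition. By contrast, the paper's literal assumption $c(\mathbf{y}_1)\ge c(\mathbf{y}_2)\Rightarrow r(\mathbf{x},\mathbf{y}_1)\le r(\mathbf{x},\mathbf{y}_2)$ forces $r$ to be constant on level sets of $c$. It therefore excludes a correct answer and a lower-scoring unrepeated error that both have $c=0$. In exchange, the paper's route covers graded rewards whenever that conditional mean is monotone. Your homotopy is also unnecessary: $1-w$ vanishes exactly where $c$ does, so $(r-r_{\min})(1-w)\equiv 0$ and $\mathrm{Cov}_{q_1}(r,w)=\mathbb{E}_{q_1}[r-r_{\min}]\,\mathbb{E}_{q_1}[1-w]\ge 0$ directly.

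Your closing remark on graded rewards, however, is false as stated. Conditioning on correctness $Z$ gives $\mathrm{Cov}(r,c)=\mathbb{E}[\mathrm{Cov}(r,c\mid Z)]+\mathrm{Cov}(\mathbb{E}[r\mid Z],\mathbb{E}[c\mid Z])$. The second term is $\le 0$, but the within-error covariance is unconstrained. For a counterexample, let $q_1$ put mass $\tfrac12$ on an error with $r=0$, $c=0$; mass $\tfrac12-\epsilon$ on an error with $r=\tfrac12$, $c=1$; and mass $\epsilon$ on a correct answer with $r=1$, $c=0$. As $\epsilon\to 0$, $J(q_1)\to\tfrac14$ while $J(q_2)\to w/(2(1+w))<\tfrac14$, where $w=e^{-\eta\lambda}<1$. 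Extending to graded rewards needs the additional condition that $r$ be non-increasing in $c$ among errors, which is what the paper's monotonicity hypothesis supplies.
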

We provide a detailed proof in Appendix~\ref{subsec:theoratical_proof}. Intuitively, repeatedly sampled errors indicate higher probability mass under the current policy, and adding an extra penalty helps reallocate this probability. Since the penalty is applied to repeated errors, the expected return is non-decreasing from the perspective of optimization. \textbf{The indicator function acts as a memory for the reward model and adjusts the score based on the historical sampling behavior of the policy.}

\subsection{Logic Feature Extraction}
\label{method:logits_cluster}
When implementing the indicator function \(c(\mathbf{y})\), the key aspect is to handle the relationships among different responses. To do so efficiently, we directly reuse the layer-wise logits of each response.

Specifically, suppose the final answer tokens are \(\{y_{L-t}, \ldots, y_L\} \subset \{y_1, \ldots, y_L\}\), and let \(y^* = y_{L-t}\) denote the first token of the final answer. The corresponding logits at each layer $n$ are \(l^{(n)}_{y_{L-t}} \in \mathbb{R}^{V}\), where V is the size of vocabulary. We select the logit corresponding to this token position, denoted as \(l^{*(n)} \in \mathbb{R}\), and aggregate them across layers. The evolution of logits for the final answer depends on feature activations across different layers \citep{anthropicgoldengate}, and therefore contains information about the model’s internal reasoning process. Figure~\ref{fig: circuit} provides an intuitive illustration of this process. We also present further examples in Section~\ref{subsec:logits_pattern}, showing that this correlation construction captures more fundamental similarities than approaches based solely on surface-level string matching. 
\begin{figure*}
    \centering
    \includegraphics[width=\linewidth]{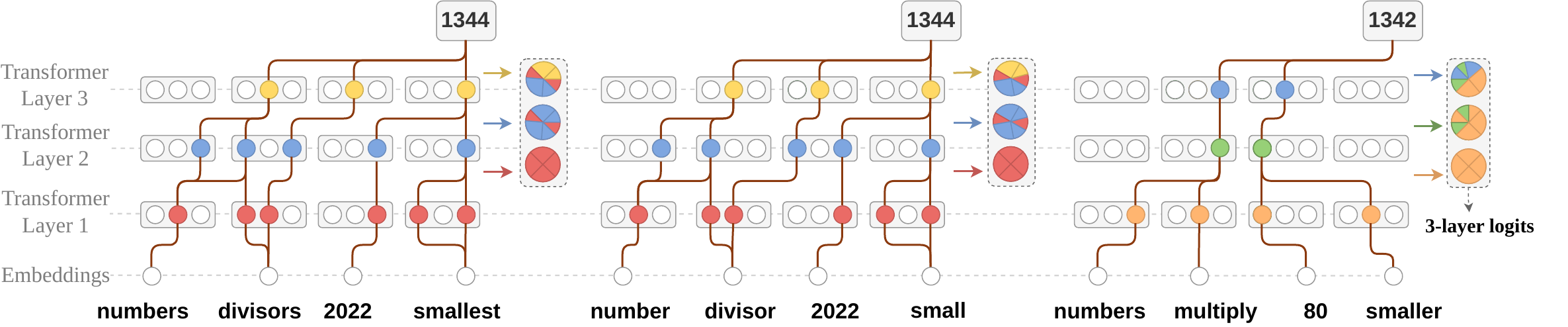}
    \caption{The logits aggregation process, exemplified by a 3-layer model feature graph. Each layer contributes a scalar logit at the final answer token position, whose value is determined by the activated computational circuits. The \textbf{left} and \textbf{middle} graphs activate nearly identical circuits, resulting in nearly identical 3-layer logit vectors. In contrast, the \textbf{right} graph exhibits a substantially different circuit structure, yielding divergent logits.}
    \label{fig: circuit}
    \vspace{-2pt}
\end{figure*}
The logits vector can be processed in several ways. The two most straightforward approaches are direct aggregation and aggregation after differencing. For clarity, we adopt the direct aggregation approach in our description here and compare the performance of several alternatives in Section~\ref{subsec:cluster_quality}. Since earlier layers typically model simpler semantic information \citep{Engram}, we use the logits from the latter half of the layers. The resulting feature representation is
\[
f(\mathbf{y}) = \text{concat}\left(l^{*(n)} \mid n=\frac{N}{2}, \ldots, N \right) \in \mathbb{R}^{\frac{N}{2}}.
\]

\subsection{Cluster-based Reward Shaping}
Based on the constructed response representations, we can directly compute the indicator function and perform the corresponding reward shaping. For each specific prompt $\mathbf{x}$, we maintain a set of feature representations for all responses sampled across historical rollouts:
\[
G_{\mathbf{x}} = \{f(\mathbf{y}_1^1), f(\mathbf{y}_2^1), \ldots, f(\mathbf{y}_b^t)\},
\]
where \(b\) denotes the number of responses sampled for each prompt and \(t\) denotes the training epoch. We then apply HDBSCAN clustering on this set to obtain
\[
G_\mathbf{x} = \{C_1, C_2, \ldots, C_K, C_{\text{noise}}\},
\]
where \(C_i\) denotes the \(i\)-th cluster, and the number of clusters \(K\) is determined dynamically.

For a specific response \(\tilde{\mathbf{y}}\), let \(C_k\) denote the cluster to which $ f(\tilde{\mathbf{y}})$ is assigned, where \(|C_k|\) denotes the size of the cluster. The corresponding indicator is defined as \(c(\tilde{\mathbf{y}}) = \log(|C_k| + 1)\), a monotonically increasing transformation of cluster size that preserves the ordering used in the theoretical analysis in Appendix B. The reward is then adjusted as follows:
\[
\begin{aligned}
\tilde{r}(\mathbf{x}, \tilde{\mathbf{y}})
&= r(\mathbf{x}, \tilde{\mathbf{y}}) - \min\!\left(\alpha \log(|C_k| + 1), \beta\right), \text{s.t. } f(\tilde{\mathbf{y}}) \in C_k .
\end{aligned}
\]
where \(\alpha\) and \(\beta\) are hyperparameters controlling the penalty strength and the upper bound of the penalty.

%% file: sections/experiments.tex
\section{Experiments}
\input{tables/main_results}

\subsection{Experimental Setup}

\paragraph{Training Details}
We train on three base models: Qwen3-1.7B, Qwen3-8B, and Qwen2.5-Math-7B. Qwen3-8B and Qwen2.5-Math-7B represent models with and without explicit reasoning processes, while Qwen3-1.7B serves as a model with a different scale. The training corpus is constructed by merging the DAPO-Math-17K dataset~\citep{DAPO} with the levels 3-5
of the MATH subdataset~\citep{HendrycksBKABTS21,R1-Zero-Like}. All problems are formatted using the Qwen-Math template (Appendix~\ref{subsec:qwen-math-template}). All reinforcement learning training is implemented based on the \textit{verl} framework~\citep{ShengZYWZZPL025}.

Building on DAPO, our work further introduces a clustering penalty term: for each prompt, we maintain a memory of the logits representations of incorrect samples, where we concatenate the logits from the last 14 Transformer layers and apply L2 normalization, then perform clustering in the logit space using HDBSCAN. During clustering, we set the minimum cluster size to 2, the minimum number of samples to 1, and use Euclidean distance as the distance metric. As for the penalty coefficients, we set $\alpha = 0.1$ and $\beta = 0.2$ for Qwen3-1.7B, Qwen2.5-Math-7B, and set $\alpha = 0.02$ and $\beta = 0.04$ for Qwen3-8B.

In all training runs, we use a prompt-level batch size of 512 and generate 16 rollouts per prompt; the maximum prompt length is 1024 tokens. The maximum response length is 7168 tokens for Qwen3-1.7B, Qwen3-8B, and 3072 tokens for Qwen2.5-Math-7B. All training is conducted on H200 hardware. Other hyperparameters can be found in Appendix~\ref{subsec:training-hyperparams}.

\paragraph{Baselines}
We compare the proposed \methodname~with two commonly used RL algorithms: GRPO and DAPO. In addition, as discussed in section~\ref{sec:relatedwork-rewardshaping}, we also compare with a approach that encourages exploration through entropy regularization~\citep{Cheng2025ReasoningWE}. The hyperparameters and training objectives configurations for these algorithms are provided in appendix~\ref{subsec:training-hyperparams}. For reference, we also report the performance of the base model without fine-tuning as a baseline.

\paragraph{Evaluation}
We evaluate our method on five mathematical reasoning benchmarks covering different difficulty levels and problem types: AIME24~\citep{li2024numinamath}, AMC23~\citep{li2024numinamath}, MATH500~\citep{HendrycksBKABTS21}, Minerva~\citep{LewkowyczADDMRS22}, and OlympiadBench~\citep{HeLBHTSHHHZLQL024}.

For pass@$k$ evaluation, we set the number of samples to $n=128$ and report both pass@$1$ and pass@$128$. During evaluation, we also use the Qwen-Math template. The temperature is set to $1.0$ and top-$p$ to $1.0$ for all models. The random seed is fixed to $0$, and the maximum number of tokens is kept consistent with the training configuration.

\subsection{Main Results}
\begin{figure}[t]
    \centering
    \includegraphics[width=\linewidth]{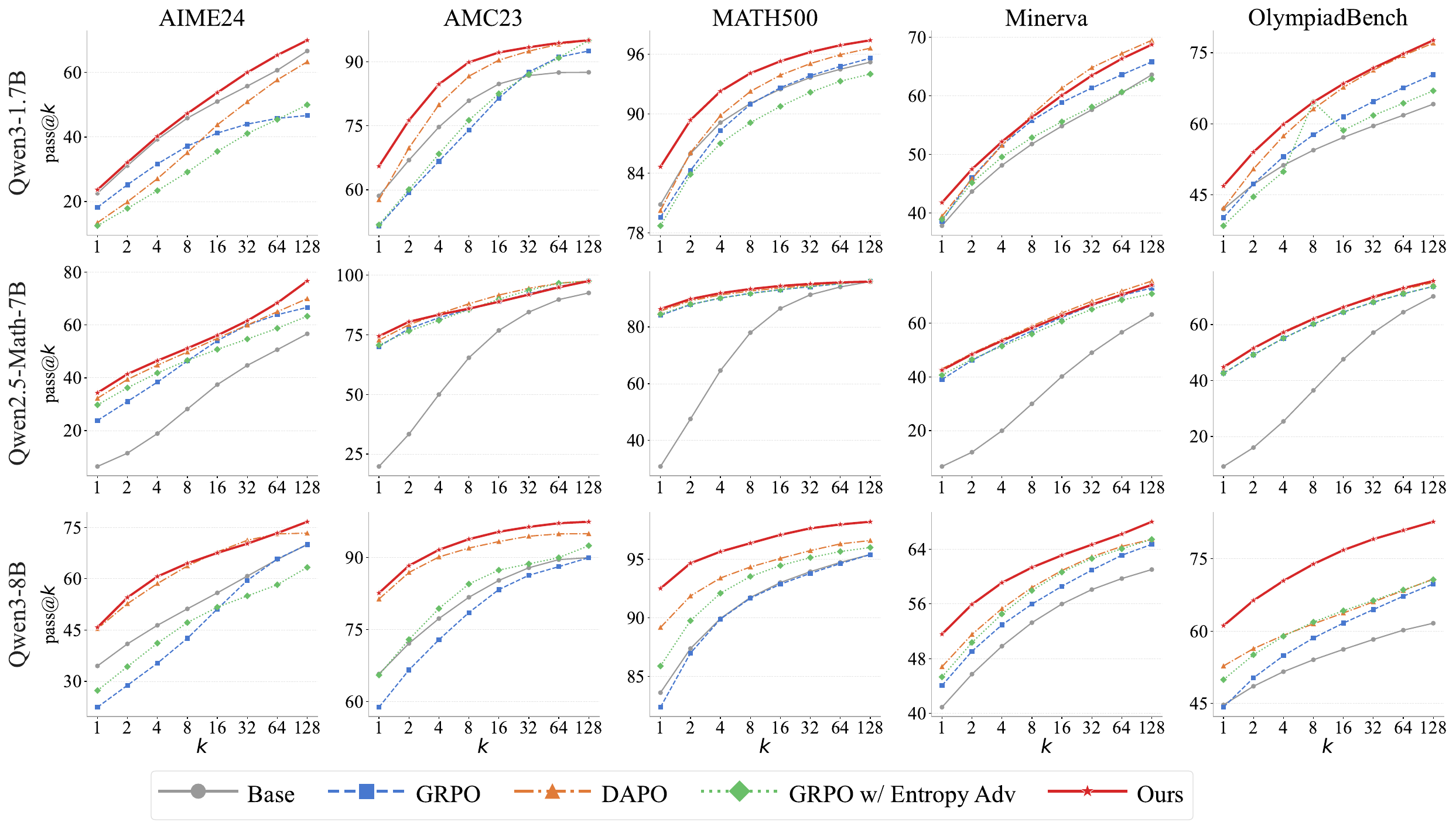}
    \caption{Pass@$k$ across $k \in \{1, 2, 4, 8, 16, 32, 64, 128\}$ on five math benchmarks for three model scales. Each row corresponds to one model; each column to one dataset. Ours consistently matches or outperforms Base, GRPO, DAPO, and GRPO with entropy advantage across settings.}
    \label{fig:passk_full}
\end{figure}

Table~\ref{tab:main_results} summarizes the performance across three base models and five mathematical reasoning benchmarks. Our method consistently achieves the best overall results.


Across all three model scales, our method consistently achieves the best overall performance, outperforming both the base model and strong baselines such as DAPO. This advantage is observed on both Qwen3-1.7B and Qwen2.5-Math-7B, and becomes even more pronounced on Qwen3-8B model, suggesting that the benefits of our method scales with the capability of the base model.
We also plot the pass@\(k\) curves, where \(k\) takes values in \(\{1, 2, 4, 8, 16, 32, 64, 128\}\). The corresponding results are shown in Figure~\ref{fig:passk_full}, where our method achieves the best performance in most cases.

Training efficiency remains close to DAPO: on Qwen2.5-Math-7B, \methodname{} takes 8.46 minutes for 50 steps and 9.73 minutes for 100 steps, compared with 8.00 and 8.95 minutes for DAPO.
Our method introduces only a small computational overhead compared with DAPO. Considering the consistent performance improvements, this overhead is relatively modest.
\subsection{Impact on Exploration Behavior}
\label{exp_exploration}
\begin{figure}[t]
\centering
\begin{minipage}[t]{0.37\columnwidth}
    \vspace{0pt}
    \centering
    \includegraphics[width=\linewidth]{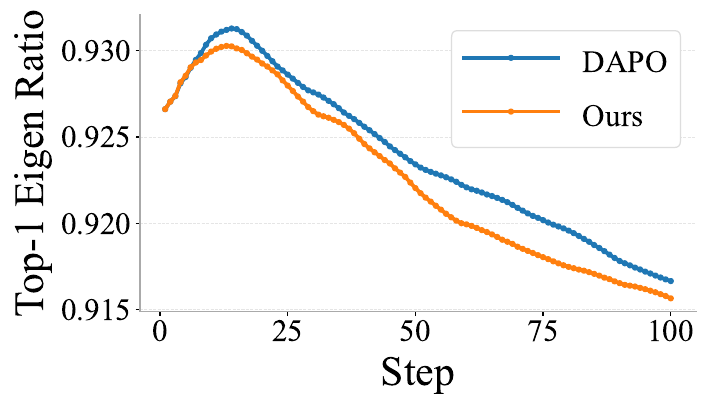}
\end{minipage}
\hspace{0.04\columnwidth}
\begin{minipage}[t]{0.27\columnwidth}
    \vspace{7pt}
    \centering
    \renewcommand{\arraystretch}{1.3}
    \resizebox{\linewidth}{!}{%
    \begin{tabular}{lc}
    \hline
    Logit Aggregation & Correlation(\%) \\ \hline
    single cluster & 45.16 \\ \hline
    28--diff & 52.52 \\
    14--diff & 54.71 \\ \hline
    28 & 53.87 \\
    14 & \textbf{61.2} \\ \hline
    \end{tabular}%
    }
\end{minipage}

\caption{\textbf{(Left)} Top-1 eigen ratio over training steps for DAPO and Ours. Our method consistently yields lower ratios at later steps, indicating less concentration on a single dominant error direction. \textbf{(Right)} Correlation of different logit aggregation strategies on Qwen2.5-Math-7B. Single cluster assigns all logits to one cluster; 28 and 14 use direct aggregation over all 28 layers or the last 14 layers; 28--diff and 14--diff use aggregation over layer-wise differences between adjacent logits.}

\label{fig:eigen_ratio_claude_correlation}
\end{figure}

To better understand how our method influences the model's exploration behavior during reasoning, we conduct a detailed analysis from both behavioral and representational perspectives. Specifically, we first employ an LLM-based evaluation to measure \textbf{Within-Step Diversity} and \textbf{Across-Step Diversity}, which capture two complementary aspects of rollout diversity. We then further analyze the statistical structure of the logits to provide additional evidence of diversity at the representation level.

Within-Step Diversity measures the diversity of rollouts generated by DAPO and \methodname~at the same training step for the same input prompt. Since the first training epoch is mainly used for memory formation and does not involve reward shaping, we restrict this analysis to checkpoints from the second epoch, spanning approximately step 20 to step 100. At each step, we randomly sample 10 groups of 160 rollouts from each method, resulting in 1,600 rollout groups in total, and use \textit{Claude-Haiku-4.5} with the prompt in Appendix~\ref{app:prompt} to score group-level output diversity.

Across-Step Diversity instead evaluates whether later-stage rollouts exhibit new reasoning patterns relative to earlier ones for the same model and input prompt. For this purpose, we construct 21 pairs of data groups with varying step intervals, each roughly one epoch apart, where each data group contains 10 rollout groups from DAPO and 10 from \methodname. Using the same prompt-based protocol, the LLM compares rollouts from the earlier and later steps and scores the extent to which the later outputs introduce novel reasoning patterns. In both settings, higher scores indicate greater diversity.

Beyond these LLM-based evaluations, we further analyze the covariance structure of the stored logits across all steps and compute the ratio between the largest eigenvalue and the sum of all eigenvalues. We refer to this metric as the \textit{Top-1 Eigen Ratio}. Its formal definition is given in Appendix~\ref{app:top1_eigen_ratio}. A smaller value of this ratio indicates a more uniform spread of the logits across different directions in the representation space.

\paragraph{Our method improves exploration across multiple metrics.} The LLM-based evaluation results are shown in the bottom-right panel of Figure~\ref{fig:problem}, and the \textit{Top-1 Eigen Ratio} results are presented in Figure~\ref{fig:eigen_ratio_claude_correlation} (left). We observe that rollout diversity generally declines as training progresses. However, our method consistently attains higher exploration scores than the baseline under both Within-Step Diversity and Across-Step Diversity. This trend is further supported by the \textit{Top-1 Eigen Ratio}, where our method yields consistently lower values than the baseline, indicating a more diverse exploration behavior.

\subsection{Logits can Reflect Reasoning Patterns}
\label{subsec:logits_pattern}
\begin{figure}[t]
\centering
\includegraphics[width=\linewidth]{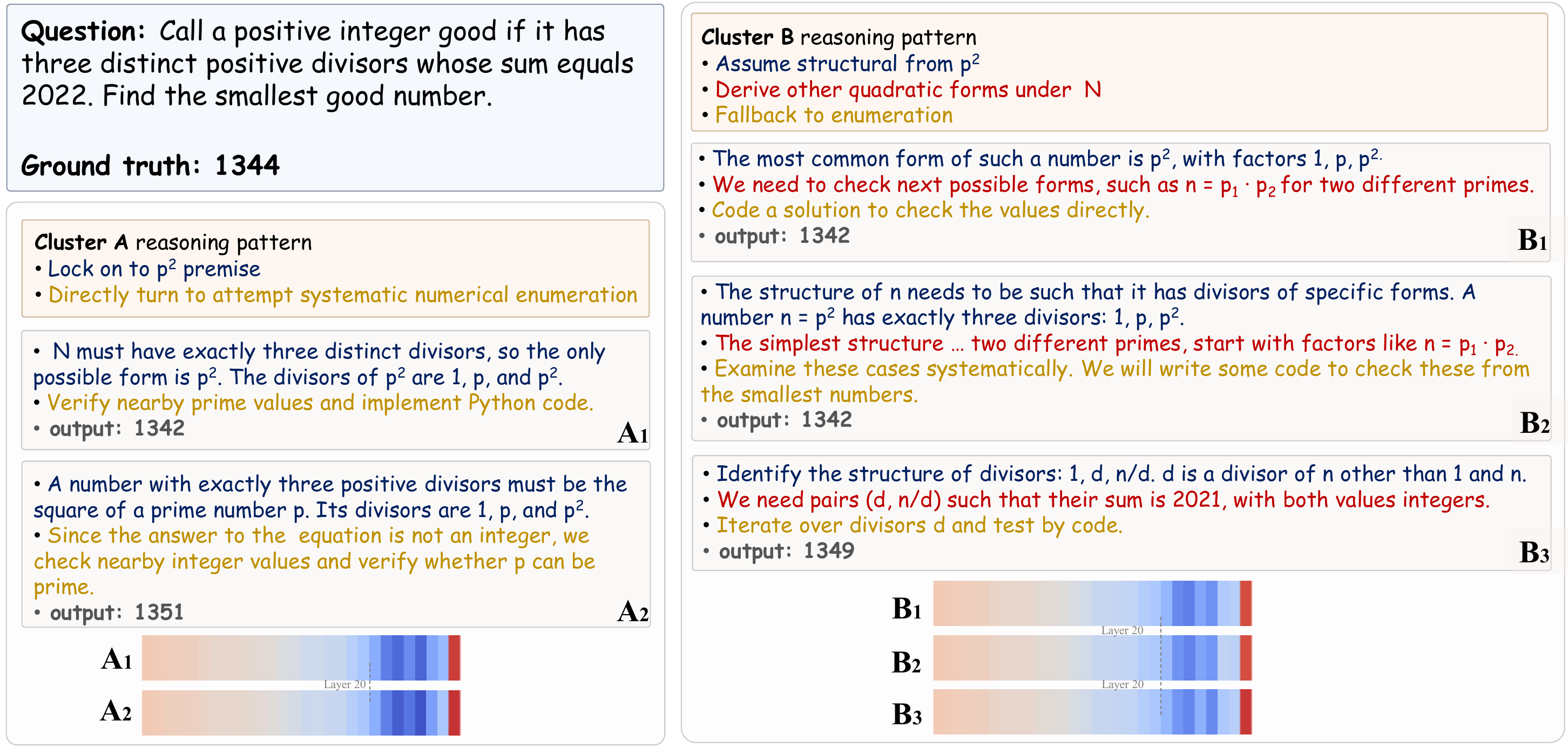}
\caption{
A case study. We manually annotate the reasoning structures of different responses. The responses are grouped into Cluster A containing $\mathbf{A_1, A_2}$ and Cluster B containing $\mathbf{B_1, B_2, B_3}$ by our clustering method. Although $\mathbf{A_1, B_1}$ and $\mathbf{B_2}$ produce the same incorrect answer 1342, their reasoning structures are different. A similar phenomenon can also be observed for the responses within Cluster B. We further visualize the logits using a heatmap. See Appendix~\ref{app: case_study} for the complete case study and the corresponding logit values.}
\label{fig:logit pattern}
\end{figure}

To demonstrate that logit vectors capture the underlying logical reasoning structure, we first perform a qualitative analysis of the logit vectors collected from rollouts across the \methodname~training process. This preliminary analysis suggests that responses with similar reasoning patterns tend to exhibit similar layer-wise logit trajectories across layers.

\paragraph{Logit patterns reflect reasoning structure}

Figure~\ref{fig:logit pattern} presents a case study example. The logits show relatively limited variation in the first 20 layers, where they primarily encode semantic information. In contrast, larger changes emerge in the later layers, where responses with similar reasoning patterns show clear correlations in their logit vectors, with layer-wise values tending to fall within similar ranges. Meanwhile, responses following different reasoning paths remain distinguishable. We further find that logit patterns reflect not only the final answer tokens, but also the intermediate reasoning process. Responses leading to the same answer may exhibit different logit patterns, while responses with different answers may still share similar reasoning behaviors.

\FloatBarrier
\subsection{Logits Provide Correct Clustering Signals}
\label{subsec: logit reflect}
To further validate this observation at scale, we introduce an automated annotation procedure using \textit{Claude-Haiku-4.5} as a proxy annotator. We randomly sample 800 negative responses from the entire dataset. Based on 11 common error categories distilled through manual inspection, we first ask the proxy model to assign each response to one of these error clusters. We then sample response pairs and compare whether the two responses belong to the same cluster according to both the LLM annotations and the logit-based clustering.
If the two judgments agree, i.e., both indicate that the responses belong to the same cluster or both indicate that they belong to different clusters, we regard the logit-based clustering as correct for that pair. The agreement rate is then used as the correlation metric. The manually defined error categories and the prompts used for annotation are provided in Appendix~\ref{app:prompt}.

\input{tables/ablation_results}

\paragraph{Logit clustering serves as an acceptable proxy for LLM-based clustering}
Figure~\ref{fig:eigen_ratio_claude_correlation} (right) shows the correlation between our logit-based clustering results and the annotations produced by Claude. With a correlation of 61.2\%, the logit clustering exhibits a substantial positive relationship with the logical similarity identified by the LLM annotator.

\subsection{Clustering Quality Matters}
\label{subsec:cluster_quality}

Building on the rich representational signals provided by logits, clustering features can be constructed in multiple ways. We investigate how different feature construction choices affect clustering quality and how these differences further influence downstream evaluation performance.

Specifically, we consider several feature construction strategies that vary in the selected layers and in whether logit differencing is applied.
As a worst-case baseline, we also include a degenerate setting in which all responses are assigned to a single cluster, yielding a trivial clustering with virtually no discriminative power. All experiments use Qwen2.5-Math-7B and differ only in the choice of clustering features. Following the Claude-correlation evaluation protocol described in Section~\ref{subsec:logits_pattern}, we report the results of different logit aggregation strategies in Figure~\ref{fig:eigen_ratio_claude_correlation} (right).

\paragraph{Using the last 14 layers as clustering features yields the best downstream performance.}
Table~\ref{tab:layer-selection} reports the results under different feature construction settings. Among all configurations, using the last 14 layers as clustering features yields the best performance. In contrast, excluding logits altogether results in performance worse than DAPO, while every variant that leverages clustering information outperforms the DAPO baseline. This provides further evidence that logits serve as effective historical signals for reward shaping.

\paragraph{Clustering quality strongly correlates with downstream performance.}
Figure~\ref{fig:eigen_ratio_claude_correlation} (right) shows that using the last 14 layers as clustering features achieves the highest agreement with the Claude annotations, whereas the degenerate single-cluster setting yields the lowest.
More importantly, the ranking of clustering consistency closely mirrors the ranking of downstream performance, suggesting that better clustering yields larger downstream improvements. This provides further evidence for the effectiveness of our approach.

%% file: tables/main_results.tex
\begin{table*}[t]
\centering
\small
\setlength{\tabcolsep}{3pt}
\renewcommand{\arraystretch}{1.3}

\resizebox{\textwidth}{!}{
\begin{tabular}{l c*{12}{c}}
\toprule
\multicolumn{1}{c}{\multirow{2}{*}{\textbf{Method}}} &
\multicolumn{2}{c}{\textbf{AIME24}} &
\multicolumn{2}{c}{\textbf{AMC23}} &
\multicolumn{2}{c}{\textbf{MATH500}} &
\multicolumn{2}{c}{\textbf{Minerva}} &
\multicolumn{2}{c}{\textbf{OlympiadBench}} &
\multicolumn{2}{c}{\textbf{Average}} \\
& \textbf{pass@1} & \textbf{pass@128} &
\textbf{pass@1} & \textbf{pass@128} &
\textbf{pass@1} & \textbf{pass@128} &
\textbf{pass@1} & \textbf{pass@128} &
\textbf{pass@1} & \textbf{pass@128} &
\textbf{pass@1}  & \textbf{pass@128} \\
\midrule

Qwen3-1.7B & \second{22.58} & \second{66.67} & \second{58.57} & 87.50 & \second{80.86} & 95.20 & 37.82 & 63.6 & 41.97 & 64.15 & \second{48.36} & 75.42 \\

+GRPO & 18.20 & 46.67 & 51.58 & \second{92.50} & 79.58 & 95.60 & 38.62 & 65.81 & 40.19 & 70.37 & 45.63 & 74.19 \\

+DAPO & 13.46 & 63.33 & 57.65 & \best{95.00} & 80.25 & \second{96.60} & \second{39.45} & \best{69.49} & \second{42.20} & \second{77.04} & 46.6 & \second{80.29} \\

+GRPO w/ Entropy Adv & 12.50 & 50.00 & 51.80 & \best{95.00} & 78.72 & 94.00 & 38.91 & 62.87 & 38.47 & 66.96 & 44.08 & 73.77 \\

\rowcolor{gray!14}
\textbf{+\methodname} & \best{23.65} & \best{70.00} & \best{65.51} & \best{95.00} & \best{84.66} & \best{97.40} & \best{41.77} & \second{68.75} & \best{46.86} & \best{77.63} & \best{52.49} & \best{81.76} \\

\midrule

Qwen2.5-Math-7B & 6.43 & 56.67 & 19.82 & \second{92.50} & 30.84 & \second{96.00} & 6.62 & 63.24 & 9.27 & 70.07 & 14.60 & 75.70 \\

+GRPO & 23.85 & 66.67 & 70.08 & \best{97.50} & 84.10 & \best{96.20} & 39.07 & 73.16 & 42.57 & 73.63 & 51.93 & 81.43 \\

+DAPO & \second{32.27} & \second{70.00} & \second{72.71} & \best{97.50} & \second{85.61} & \second{96.00} & \best{43.01} & \best{75.74} & \second{44.43} & \second{75.00} & \second{55.61} & \second{82.85} \\

+GRPO w/ Entropy Adv & 29.74 & 63.33 & 70.64 & \best{97.50} & 84.50 & \best{96.20} & 40.68 & 70.96 & 42.57 & 73.63 & 53.63 & 80.32 \\

\rowcolor{gray!14}\textbf{+\methodname} & \best{34.32} & \best{76.67} & \best{74.38} & \best{97.50} & \best{86.33} & \second{96.00} & \second{42.51} & \second{74.26} & \best{44.80} & \best{75.56} & \best{56.47} & \best{84.00} \\

\midrule

Qwen3-8B & 34.51 & 70.00 & 65.72 & 90.00 & 83.62 & 95.40 & 40.90 & 61.03 & 44.69 & 61.63 & 53.89 & 75.61 \\

+GRPO & 22.45 & 70.00 & 58.79 & 90.00 & 82.38 & 95.40 & 44.07 & 64.71 & 44.22 & 69.78 & 50.38 & 77.98 \\

+DAPO & \second{45.42} & \second{73.33} & \second{81.37} & \second{95.00} & \second{89.18} & \second{96.60} & \second{46.82} & \second{65.44} & \second{52.77} & \second{70.81} & \second{63.11} & \second{80.24} \\

+GRPO w/ Entropy Adv & 27.32 & 63.33 & 65.49 & 92.50 & 85.89 & 96.00 & 45.32 & \second{65.44} & 49.92 & 70.67 & 54.79 & 77.59 \\

\rowcolor{gray!14}\textbf{+\methodname} & \best{45.78} & \best{76.67} & \best{82.62} & \best{97.50} & \best{92.51} & \best{98.20} & \best{51.58} & \best{68.02} & \best{61.12} & \best{82.67} & \best{66.72} & \best{84.61} \\

\bottomrule
\end{tabular}
}
\caption{Main results on five mathematical reasoning benchmarks. Best and second-best are marked with \best{bold} and \second{underline}; \methodname{} rows are highlighted.}
\label{tab:main_results}
\end{table*}

%% file: tables/ablation_results.tex
\begin{table*}[t]
\centering
\setlength{\tabcolsep}{3pt}
\renewcommand{\arraystretch}{1.3}
\resizebox{\textwidth}{!}{%
\begin{tabular}{lccccccccccccc}
\hline
\multicolumn{1}{c}{\multirow{2}{*}{\textbf{Method}}} & \multicolumn{2}{c}{\textbf{AIME24}} & \multicolumn{2}{c}{\textbf{AMC}} & \multicolumn{2}{c}{\textbf{MATH500}} & \multicolumn{2}{c}{\textbf{Minerva}} & \multicolumn{2}{c}{\textbf{OlympiadBench}} & \multicolumn{2}{c}{\textbf{Average}} \\
\multicolumn{1}{c}{} & \textbf{pass@1} & \textbf{pass@128} & \textbf{pass@1} & \textbf{pass@128} & \textbf{pass@1} & \textbf{pass@128} & \textbf{pass@1} & \textbf{pass@128} & \textbf{pass@1} & \textbf{pass@128} & \textbf{pass@1} & \textbf{pass@128} \\ \hline
DAPO & 32.27 & 70.00 & 72.71 & 97.50 & 85.61 & 96.00 & 43.01 & 75.74 & 44.43 & 75.00 & 55.61 & 82.85 \\
\methodname-single cluster & 29.70 & 70.00 & 73.60 & 100.00 & 86.60 & 96.80 & 41.20 & 71.69 & 44.60 & 74.52 & 55.14 & 82.60 \\
\methodname-28-diff & 31.88 & 70.00 & 72.34 & 97.50 & 85.80 & 96.00 & 43.38 & 73.53 & 45.48 & 74.37 & 55.78 & 82.28 \\
\methodname-14-diff & 32.40 & 73.33 & 71.64 & 100.00 & 86.40 & 96.60 & 43.01 & 72.43 & 44.15 & 74.52 & 55.52 & 83.38 \\
\methodname-28 & 35.63 & 70.00 & 73.13 & 97.50 & 86.20 & 96.40 & 41.90 & 73.16 & 44.00 & 73.78 & 56.17 & 82.17 \\
\rowcolor[HTML]{EDEDED} 
\methodname-14 & 34.32 & 76.67 & 74.38 & 97.50 & 86.33 & 96.00 & 42.51 & 74.26 & 44.80 & 75.56 & 56.47 & 84.00 \\ \hline
\end{tabular}%
}
\caption{Comparison of different clustering feature constructions on Qwen2.5-Math-7B, including assigning all logits to a single cluster (\textit{single cluster}); using adjacent logit differences over 28 layers (\textit{28-diff}) or the last 14 layers (\textit{14-diff}); using logits from all 28 layers; and using only the last 14 layers. The first row corresponds to the vanilla DAPO baseline.}

\label{tab:layer-selection}
\end{table*}

%% file: sections/conclusion.tex
\section{Conclusion}
In this work, we study the recurring error patterns that emerge during reinforcement learning for LLM reasoning. We introduce \methodname, a memory-aware reward shaping framework that dynamically penalizes recurrent failure patterns. We first provide theoretical analysis demonstrating the effectiveness of our method. By reusing layer-wise logits as lightweight representations of reasoning trajectories, our approach clusters historical responses to identify and suppress repeated error modes. Empirical results across multiple benchmarks show that \methodname~consistently improves both reasoning performance and exploration diversity.
These findings highlight the importance of incorporating historical trajectory information into reward design and open new directions for improving exploration in LLM reinforcement learning.

%% file: sections/limitations.tex
\section{Limitations}
The main limitation of our study is that the methods explored for utilizing logits are relatively simple and do not incorporate more sophisticated aggregation functions. It remains unclear whether adopting more complex aggregation strategies could further improve performance.

%% file: sections/appendix.tex
\section{Implementation Details}
\label{sec:appendix}

\subsection{Qwen-Math Template}
\label{subsec:qwen-math-template}
All problems in our training and evaluation datasets are processed using the Qwen-Math template, which provides a standardized format for mathematical problem-solving prompts.

\begin{verbatim}
<|im_start|>system
Please reason step by step, and put your 
final answer within \boxed{}.<|im_end|>
<|im_start|>user
{problem}
<|im_end|>
<|im_start|>assistant
\end{verbatim}

\subsection{Training Hyperparameters and Objectives}
\label{subsec:training-hyperparams}

All methods are trained with the Adam optimizer at a learning rate of $1\times10^{-6}$, weight decay $0.1$, gradient clipping threshold $1.0$, and $10$ warmup steps, with a PPO mini-batch size of $16$.

\textbf{GRPO} follows the official DeepSeekMath configuration~\citep{grpo}, with clipping ratio $\epsilon=0.2$ and KL penalty coefficient $\beta=10^{-3}$.

\textbf{DAPO} employs decoupled clipping with asymmetric thresholds $\varepsilon_{\text{low}}=0.2$ and $\varepsilon_{\text{high}}=0.28$, and removes the KL penalty term entirely.

\textbf{GRPO w/ Entropy Adv} follows \citet{Cheng2025ReasoningWE}, augmenting the group-relative advantage with a per-token entropy bonus weighted by $\alpha$.

The training objectives are defined as follows. 
\begin{equation}
\begin{aligned}
&\mathcal{L}_{\text{GRPO}}(\theta) =
 \mathbb{E}_{q \sim \mathcal{D}} \Bigg[ \frac{1}{K} \sum_{i=1}^{K} \min \Bigg( r_i(\theta) A_i,\mathrm{clip}(r_i(\theta), 1{-}\epsilon, 1{+}\epsilon) A_i \Bigg) \Bigg] - \beta\, \mathbb{D}_{\mathrm{KL}}\!\left[\pi_\theta \,\|\, \pi_{\mathrm{ref}}\right]
\end{aligned}
\end{equation}

\begin{equation}
\begin{aligned}
\mathcal{L}_{\text{DAPO}}(\theta) 
  &= \mathbb{E}_{q \sim \mathcal{D}} \Bigg[ \frac{1}{K} \sum_{i=1}^{K} \min \Bigg( r_i(\theta) A_i,\mathrm{clip}(r_i(\theta), 1{-}\epsilon_{\text{low}}, 1{+}\epsilon_{\text{high}}) A_i \Bigg) \Bigg]
\end{aligned}
\end{equation}

\begin{equation}
\begin{aligned}
&\mathcal{L}_{\text{GRPO+Ent}}(\theta) 
= \mathbb{E}_{q \sim \mathcal{D}} \Bigg[ \frac{1}{K} \sum_{i=1}^{K} \min \Bigg( r_i(\theta) \tilde{A}_i,\mathrm{clip}(r_i(\theta), 1{-}\epsilon, 1{+}\epsilon) \tilde{A}_i \Bigg) \Bigg] - \beta\, \mathbb{D}_{\mathrm{KL}}\!\left[\pi_\theta \,\|\, \pi_{\mathrm{ref}}\right]
\end{aligned}
\end{equation}
where the probability ratio, group-relative advantage, entropy-augmented advantage, and per-token entropy are respectively defined as:
\begin{equation}
r_i(\theta) = \frac{\pi_\theta(o_i \mid q)}{\pi_{\theta_{\text{old}}}(o_i \mid q)}
\end{equation}
\begin{equation}
A_i = \frac{R_i - \text{mean}(\{R_j\}_{j=1}^{K})}{\text{std}(\{R_j\}_{j=1}^{K})}
\end{equation}
\begin{equation}
\tilde{A}_i = A_i + \alpha \cdot \mathcal{H}_t
\end{equation}
\begin{equation}
\mathcal{H}_t = -\sum_{v \in \mathcal{V}} \pi_\theta(v \mid q, o_{<t}) \log \pi_\theta(v \mid q, o_{<t})
\end{equation}



\section{Theoratical Proof}
\label{subsec:theoratical_proof}
\begin{theorem}
Suppose there are two different reward signals, $\mu_1 = r(\mathbf{x}, \mathbf{y})$ and $\mu_2 = r(\mathbf{x}, \mathbf{y}) - \lambda c(\mathbf{y})$, and let $q_1$ and $q_2$ denote the corresponding updated policies under these reward signals. Let $J(q) = \mathbb{E}_{\mathbf{x}, \mathbf{y} \sim q}[r(\mathbf{x}, \mathbf{y})]$. Then it holds that
\begin{align}
J(q_2) \geq J(q_1),
\end{align}
where $\lambda$ is the repetition penalty coefficient. $c(\mathbf{y}): \mathbf{x} \times Y \rightarrow \mathbb{N}$ denotes the indicator of the number of repeated errors, where a value greater than zero indicates that $\mathbf{y}$ corresponds to an error that has been repeatedly sampled in the past. The function $r(\mathbf{x}, \mathbf{y})$ is a task-specific scoring function.
\end{theorem}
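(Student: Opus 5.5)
The plan is to work prompt by prompt and make ``updated policy'' concrete through the KL-regularized objective of the Preliminaries. For a fixed $\mathbf{x}$, the maximizer of $\mathbb{E}_{\mathbf{y}\sim p}[\mu(\mathbf{x},\mathbf{y})]-\frac{1}{\eta}\mathrm{KL}(p\,\|\,p_{\mathrm{ref}})$ is the Gibbs policy $p(\mathbf{y}\mid\mathbf{x})\propto p_{\mathrm{ref}}(\mathbf{y}\mid\mathbf{x})\exp(\eta\,\mu(\mathbf{x},\mathbf{y}))$. I would derive this by the standard variational (Donsker--Varadhan) argument, or simply cite it. Applying it with $\mu_1$ and $\mu_2$ gives
\[
q_2(\mathbf{y}\mid\mathbf{x})=\frac{q_1(\mathbf{y}\mid\mathbf{x})\,w(\mathbf{y})}{\mathbb{E}_{q_1}[w]},\qquad w(\mathbf{y})=e^{-\eta\lambda c(\mathbf{y})}\in(0,1].
\]
So $q_2$ is an exponential tilt of $q_1$ that down-weights exactly the repeated errors. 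The claim then reduces to showing
\[
\mathbb{E}_{q_1}[r\,w]\;\ge\;\mathbb{E}_{q_1}[r]\,\mathbb{E}_{q_1}[w]\quad\text{for each }\mathbf{x},
\]
that is, $\mathrm{Cov}_{q_1}(r,w)\ge 0$. Averaging over $\mathbf{x}\sim D$ then gives $J(q_2)\ge J(q_1)$.

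The key step is establishing this nonnegative covariance, and it is where the hypotheses must be made precise. The statement only says $c(\mathbf{y})>0$ means $\mathbf{y}$ is a repeated \emph{error}. I would formalize ``error'' in the verifiable-reward setting of the paper: $r\in\{0,1\}$, or more generally every error attains the minimal reward $r_{\min}$, and $c(\mathbf{y})=0$ whenever $r(\mathbf{x},\mathbf{y})>r_{\min}$. Under this assumption $w\equiv 1$ on non-errors and $w\le 1$ on errors, so $w$ is a nondecreasing function of $r$. Chebyshev's association inequality (two comonotone functions of the same random variable have nonnegative covariance) then gives the result.

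In the binary case I would give the one-line version. Write $P$ for the $q_1$-probability of a correct answer. Then
\[
J(q_2)=\frac{P}{P+\mathbb{E}_{q_1}[w\,\mathbf{1}\{r=0\}]}\ \ge\ \frac{P}{P+(1-P)}=P=J(q_1),
\]
with strict inequality whenever repeated errors carry positive $q_1$-mass and $\lambda>0$.

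I expect the main obstacle to be that the theorem as literally stated is false without this ``errors have minimal reward'' (comonotonicity) assumption. If the penalty could hit responses with intermediate reward while sparing lower-reward ones, the tilt could move mass toward worse responses. I would therefore state the assumption explicitly as the formal reading of ``$c(\mathbf{y})>0$ only for errors''. I would also note that the monotonicity of $c$ in the number of occurrences, and the clipped $\log(|C_k|+1)$ form used in practice, are not needed for the inequality itself. They matter only for explaining why more frequent errors lose more mass, and hence when the gain is strict. If the paper intends a single policy-gradient step rather than the exact KL optimum, I would instead use the first-order expansion $q_2-q_1\approx\eta\,q_1(\mu_2-\mathbb{E}\mu_2)-\eta\,q_1(\mu_1-\mathbb{E}\mu_1)$. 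This yields the same covariance condition to first order in $\eta$.
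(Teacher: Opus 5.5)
Your proposal is correct and follows the paper's proof essentially step for step: the Gibbs form of the KL-regularized update, the tilt $q_2=q_1w/\mathbb{E}_{q_1}[w]$ with $w=e^{-\eta\lambda c}$, the identity $J_{\mathbf{x}}(q_2)-J_{\mathbf{x}}(q_1)=\mathrm{Cov}_{q_1}(r,w)/\mathbb{E}_{q_1}[w]$, Chebyshev's inequality, and averaging over $\mathbf{x}$. The only difference is the extra hypothesis used to sign the covariance, which the paper also has to add: the paper postulates that $r$ is non-increasing in $c$ and actually uses it in the form ``$\mathbb{E}_{q_1}[r\mid c=k]$ is non-increasing in $k$,'' which your assumption (only minimal-reward errors are penalized) implies, and unlike the paper's literal pointwise condition, which forces all responses with $c=0$, correct or not, to share one reward, your version is compatible with binary verifiable rewards.
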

\begin{proof}
For clarity we work pointwise in $\mathbf{x}$ and assume $\mathcal{Y}$ is finite (the continuous case follows by replacing sums with integrals).
According to the design that imposes additional penalties on repeated errors, we have:
\begin{equation}
c(\mathbf{y_1}) \ge c(\mathbf{y_2}) \;\Longrightarrow\; r(\mathbf{x}, \mathbf{y_1}) \le r(\mathbf{x}, \mathbf{y_2}).
\label{eq:repeat_penalty}
\end{equation}
That is, responses with larger repetition count are not higher-scoring under the task reward.

For any scoring signal $\mu(\mathbf{x},\mathbf{y})$, define the KL-regularized one-step update (for each $\mathbf{x}$) as
\[
\begin{aligned}
q_\mu(\cdot\mid \mathbf{x})
\in &\arg\max_{q(\cdot\mid \mathbf{x})}
\Big\{ 
\mathbb{E}_{\mathbf{y}\sim q(\cdot\mid \mathbf{x})}[\mu(\mathbf{x},\mathbf{y})] - \frac{1}{\eta}
\mathrm{KL}(q(\cdot\mid \mathbf{x})\,\|\,p_{\mathrm{ref}}(\cdot\mid \mathbf{x}))
\Big\}.
\end{aligned}
\]
Using a Lagrange multiplier for the constraint $\sum_\mathbf{y} q(\mathbf{y}\mid \mathbf{x})=1$, the optimizer has the Gibbs form\citep{DPO}
\begin{equation}
\begin{aligned}
q_\mu(\mathbf{y}\mid \mathbf{x})
&= \frac{p_{\mathrm{ref}}(\mathbf{y}\mid \mathbf{x})\exp(\eta\,\mu(\mathbf{x},\mathbf{y}))}{Z_\mu(\mathbf{x})}, \\
Z_\mu(\mathbf{x})
&= \sum_{\mathbf{y}} p_{\mathrm{ref}}(\mathbf{y}\mid \mathbf{x})\exp(\eta\,\mu(\mathbf{x},\mathbf{y})).
\end{aligned}
\label{eq:gibbs}
\end{equation}
Let $\mu_1(\mathbf{x},\mathbf{y})=r(\mathbf{x},\mathbf{y})$ and $\mu_2(\mathbf{x},\mathbf{y})=r(\mathbf{x},\mathbf{y})-\lambda c(\mathbf{y})$ with $\lambda>0$, and denote $q_1=q_{\mu_1}$, $q_2=q_{\mu_2}$.

From \eqref{eq:gibbs},
\[
\begin{aligned}
q_1(\mathbf{y}\mid \mathbf{x})
&= \frac{p_{\mathrm{ref}}(\mathbf{y}\mid \mathbf{x})\exp(\eta r(\mathbf{x},\mathbf{y})}{Z_1(\mathbf{x})}, \\
q_2(\mathbf{y}\mid \mathbf{x})
&= \frac{p_{\mathrm{ref}}(\mathbf{y}\mid \mathbf{x})\exp(\eta r(\mathbf{x},\mathbf{y}) - \eta\lambda c(\mathbf{y}))}{Z_2(\mathbf{x})}.
\end{aligned}
\]
Define $w(\mathbf{y}):=\exp(-\eta\lambda c(\mathbf{y}))\in(0,1]$. Then for each fixed $\mathbf{x}$,
\begin{equation}
\begin{aligned}
q_2(\mathbf{y}\mid \mathbf{x})
&= \frac{q_1(\mathbf{y}\mid \mathbf{x})\,w(\mathbf{y})}
{\sum_{\mathbf{y}'} q_1(\mathbf{y}'\mid \mathbf{x})\,w(\mathbf{y}')} \\
&= \frac{q_1(\mathbf{y}\mid \mathbf{x})\,w(\mathbf{y})}
{\mathbb{E}_{\mathbf{y}\sim q_1(\cdot\mid \mathbf{x})}[w(\mathbf{y})]}.
\end{aligned}
\label{eq:reweight}
\end{equation}
Define the per-$\mathbf{x}$ expected reward under policy $q$ as
\[
J_\mathbf{x}(q) := \mathbb{E}_{\mathbf{y}\sim q(\cdot\mid \mathbf{x})}[r(\mathbf{x},\mathbf{y})] .
\]

Using \eqref{eq:reweight}, we obtain
\begin{equation}
\begin{aligned}
J_\mathbf{x}(q_2)
&= \sum_\mathbf{y} q_2(\mathbf{y}\mid \mathbf{x}) r(\mathbf{x},\mathbf{y}) \\
&= \frac{\sum_\mathbf{y} q_1(\mathbf{y}\mid \mathbf{x})\,w(\mathbf{y})\,r(\mathbf{x},\mathbf{y})}
        {\sum_\mathbf{y} q_1(\mathbf{y}\mid \mathbf{x})\,w(\mathbf{y})} \\
&= \frac{\mathbb{E}_{\mathbf{y}\sim q_1(\cdot\mid \mathbf{x})}[w(\mathbf{y})\,r(\mathbf{x},\mathbf{y})]}
        {\mathbb{E}_{\mathbf{y}\sim q_1(\cdot\mid \mathbf{x})}[w(\mathbf{y})]}.
\end{aligned}
\label{eq:Jratio}
\end{equation}

Hence,
\begin{equation}
\begin{aligned}
J_\mathbf{x}(q_2) - J_\mathbf{x}(q_1)
&= \frac{\mathbb{E}_{q_1}[w\,r] - \mathbb{E}_{q_1}[w]\mathbb{E}_{q_1}[r]}
        {\mathbb{E}_{q_1}[w]} \\
&= \frac{\mathrm{Cov}_{q_1}(r(\mathbf{x},\mathbf{y}), w(\mathbf{y}))}
        {\mathbb{E}_{q_1}[w(\mathbf{y})]}.
\end{aligned}
\label{eq:cov}
\end{equation}

Since $\mathbb{E}_{q_1}[w(\mathbf{y})] > 0$, it suffices to show that
$\mathrm{Cov}_{q_1}(r(\mathbf{x},\mathbf{y}), w(\mathbf{y})) \ge 0$.

Let $Y \sim q_1(\cdot\mid \mathbf{x})$ and $C = c(Y)$. Note that
$w(Y) = \exp(-\eta\lambda C)$ is non-increasing in $C$.
By equation \eqref{eq:repeat_penalty}, $r(\mathbf{x},Y)$ is also non-increasing in $C$
(equivalently, the conditional mean $\bar r_\mathbf{x}(k) := \mathbb{E}[r(\mathbf{x},Y)\mid C=k]$
is non-increasing in $k$).

Therefore, by Chebyshev's rearrangement inequality,
\begin{equation}
\begin{aligned}
\mathbb{E}[\bar r_\mathbf{x}(C)\exp(-\eta\lambda C)]
\;\ge\;&
\mathbb{E}[\bar r_\mathbf{x}(C)] \cdot\mathbb{E}[\exp(-\eta\lambda C)] .
\end{aligned}
\end{equation}

Using the tower property,
\begin{equation}
\begin{aligned}
&\mathbb{E}_{q_1}[r(\mathbf{x},Y)w(Y)]\\
&= \mathbb{E}\!\left[
\mathbb{E}[r(\mathbf{x},Y)\mid C]\exp(-\eta\lambda C)
\right] \\
&= \mathbb{E}[\bar r_\mathbf{x}(C)\exp(-\eta\lambda C)] .
\end{aligned}
\end{equation}

Similarly,
\begin{equation}
\begin{aligned}
\mathbb{E}_{q_1}[r(\mathbf{x},Y)]
&= \mathbb{E}[\bar r_\mathbf{x}(C)], \\
\mathbb{E}_{q_1}[w(Y)]
&= \mathbb{E}[\exp(-\eta\lambda C)] .
\end{aligned}
\end{equation}

Thus, $\mathrm{Cov}_{q_1}(r(\mathbf{x},Y), w(Y)) \ge 0$, and \eqref{eq:cov} implies $J_\mathbf{x}(q_2) \ge J_\mathbf{x}(q_1)$ for all $\mathbf{x}$.

Taking expectation over $\mathbf{x} \sim D$, we obtain
\begin{equation}
\begin{aligned}
J(q_2)
&= \mathbb{E}_{\mathbf{x}\sim D}[J_\mathbf{x}(q_2)] \\
&\ge \mathbb{E}_{\mathbf{x}\sim D}[J_\mathbf{x}(q_1)] \\
&= J(q_1).
\end{aligned}
\end{equation}

This completes the proof.
\end{proof}

\newpage
\section{Full Results of Case Study}

\label{app: case_study}

\begin{tcolorbox}[breakable, enhanced, colback=white, colframe=black, boxrule=0.4pt, width=\columnwidth, enlarge left by=0mm, enlarge right by=0mm, left=2pt, right=2pt, fontupper=\small, title=\textbf{Question}, colbacktitle=gray!20, coltitle=black]
Call a positive integer \textit{good} if it has three distinct positive divisors whose sum equals 2022. Find the smallest good number.

\smallskip
\textbf{Ground truth: 1344}
\end{tcolorbox}

\subsection{Logit Values}
Figure~\ref{fig:heatmapwithnum} is the logit heatmap in Figure~\ref{fig:logit pattern}, which has logit values on it.

\begin{figure}[!htbp]
    \centering
\includegraphics[width=0.82\linewidth]{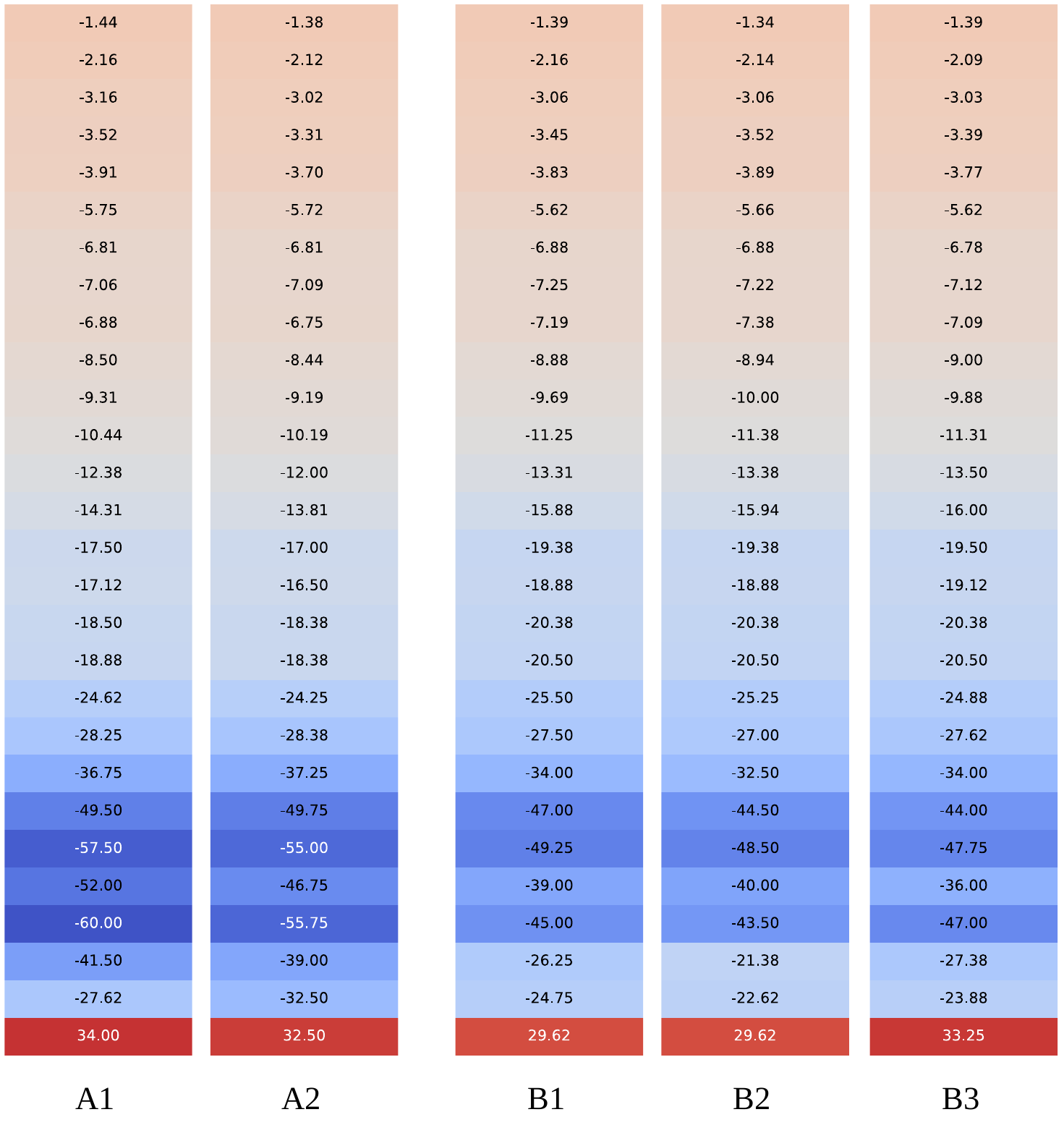}
    \caption{Heatmap in Figure~\ref{fig:logit pattern} with logit values.}
    \label{fig:heatmapwithnum}
\end{figure}

\subsection{Custer A Responses}

\begin{tcolorbox}[breakable, enhanced, colback=white, colframe=black, boxrule=0.4pt, width=\columnwidth, enlarge left by=0mm, enlarge right by=0mm, left=2pt, right=2pt, fontupper=\small]
\textbf{Response A1} \hfill \textit{Output: 1342}

\smallskip
To solve the problem of finding the smallest good number, we need to understand that a number is good if it has three distinct positive divisors that sum to 2022.

A key insight here is that if a number $n$ is a good number, then it must have exactly three distinct divisors. The only type of number that has exactly three distinct divisors is a square of a prime number. This is because the number of divisors of a number $n = p_1^{e_1} p_2^{e_2} \cdots p_k^{e_k}$ is given by $(e_1 + 1)(e_2 + 1) \cdots (e_k + 1)$. To have exactly 3 divisors, the only possible form is $p^2$ because $3$ can only be factored as $3 = 3 \times 1$.

The divisors of $p^2$ are $1, p, p^2$. We need these to sum to 2022, so we need:
\[
1 + p + p^2 = 2022
\]
This simplifies the equation to:
\[
p^2 + p + 1 = 2022
\]
Rearranging the terms, we get a standard quadratic equation:
\[
p^2 + p - 2021 = 0
\]
We will solve this quadratic equation using the quadratic formula $p = \frac{-b \pm \sqrt{b^2 - 4ac}}{2a}$:
\begin{align*}
p &= \frac{-1 \pm \sqrt{1 + 4 \times 2021}}{2} \\
  &= \frac{-1 \pm \sqrt{8085}}{2}
\end{align*}
We are only interested in the positive value of $p$, so we need to check if $\sqrt{8085}$ is an integer. We will calculate $\sqrt{8085}$ and check its value. If $8085$ is a perfect square, we can proceed further.

By calculating, we find that $\sqrt{8085} \approx 89.9166$, which is not an integer. So we need to find the closest prime number. We check the closest perfect square value and determine:
\[
89^2 = 7921 \quad \text{and} \quad 90^2 = 8100
\]
Since $8085$ lies between these two, we check the values for the primes near 89 and verify. We check whether we can substitute $89$ into our equation to see if we get a prime number for $p$:

If we let $p = 89$: $89^2 + 89 + 1 = 7921 + 89 + 1 = 8011 \neq 2022$.

So we will try $p = 83$: $83^2 + 83 + 1 = 6889 + 83 + 1 = 6973 \neq 2022$.

This also does not match. We will check $p = 83$. Let's use a closest prime number which might satisfy, let's take $p = 43$: $43^2 + 43 + 1 = 1849 + 43 + 1 = 1893 \neq 2022$.

This does not match either. Let's check with smaller primes. We will check $p = 41$: $41^2 + 41 + 1 = 1681 + 41 + 1 = 1723 \neq 2022$.

This is still not 2022. Let's check with $p = 47$: $47^2 + 47 + 1 = 2209 + 47 + 1 = 2257 \neq 2022$.

This also does not work. Now we check $p = 43$: $43^2 = 1849$, $1849 + 43 + 1 = 1893 \neq 2022$.

Let's check with larger prime numbers and verify by code to ensure we get the exact result. We can implement a Python code to check the values and find the smallest $n$.

We will start by checking $p = 45$ as a possible value: $45^2 = 2025$, which is greater than 2022, but let's proceed and confirm by code.

We need to find $p$ such that $p + 1 + p^2 = 2022$. Let's implement this code to find the exact prime number.
\begin{Verbatim}[fontsize=\footnotesize]
import math

def find_smallest_good_number():
    # We will start with the smallest possible 
    # number and check upwards
    num = 1
    while True:
        divisors = [d for d in range(1, \
                    num + 1) if num % d == 0]
        if len(divisors) >= 3:
            divisors = sorted(divisors)[:3]
            if sum(divisors) == 2022:
                return num
        num += 1

smallest_good_number = \
                find_smallest_good_number()
print(smallest_good_number)
\end{Verbatim}
\texttt{Output: 1342}

\smallskip
The smallest good number, which satisfies the condition that three distinct positive divisors sum to 2022, is $\boxed{1342}$.
\end{tcolorbox}

\begin{tcolorbox}[breakable, enhanced, colback=white, colframe=black, boxrule=0.4pt, width=\columnwidth, enlarge left by=0mm, enlarge right by=0mm, left=2pt, right=2pt, fontupper=\small]
\textbf{Response A2} \hfill \textit{Output: 1351}

\smallskip
To find the smallest number with exactly three distinct positive divisors that sum up to 2022, we need to analyze the structure of such a number. A number with exactly three positive divisors must be the square of a prime number $p$, because its divisors are 1, $p$, and $p^2$.

Given the condition, if $n = p^2$, then we need $1 + p + p^2 = 2022$. We can set up the equation:
\[
p^2 + p + 1 = 2022.
\]
Rearranging gives:
\[
p^2 + p - 2021 = 0.
\]
This is a quadratic equation of the form $p^2 + p - 2021 = 0$. We can solve for $p$ using the quadratic formula $p = \frac{-b \pm \sqrt{b^2 - 4ac}}{2a}$, where $a = 1$, $b = 1$, and $c = -2021$. This gives:
\begin{align*}
p &= \frac{-1 \pm \sqrt{1^2 - 4 \cdot 1 \cdot (-2021)}}{2 \cdot 1} \\
  &= \frac{-1 \pm \sqrt{1 + 8084}}{2} \\
  &= \frac{-1 \pm \sqrt{8085}}{2}.
\end{align*}
We are only interested in the positive root, so we compute:
\[
p = \frac{-1 + \sqrt{8085}}{2}.
\]
We need to check if $\sqrt{8085}$ is an integer, and if $\frac{-1 + \sqrt{8085}}{2}$ is a prime number.

First, let's approximate $\sqrt{8085}$: $\sqrt{8085} \approx 89.9166$. So, $\sqrt{8085}$ is very close to 90. Let's check if 90 is indeed the square root: $90^2 = 8100$, which is slightly larger than 8085. So, $\sqrt{8085} \approx 89.9166 \approx 90$ (not exact, so we check 89).

Let's check 89 directly: $89^2 = 7921$, which is smaller than 8085. Let's check 90: $90^2 = 8100$, which is slightly larger than 8085. Thus, 89 is closer and we need to check it more precisely:
\[
89^2 = 7921 \quad \text{and} \quad 90^2 = 8100.
\]
Since $89^2$ is closer to 8085, we use it: $\sqrt{8085} \approx 89.9166$, and thus:
\[
p = \frac{-1 + 89.9166}{2} \approx \frac{88.9166}{2} \approx 44.4583,
\]
which isn't an integer. Let's check the closest integer values. We need $p$ to be an integer and prime. Let's check if 89 is a solution by substituting it back:
\begin{align*}
&p^2 + p - 2021 = 89^2 + 89 - 2021 = 7921 + 89 - 2021 \\
&= 4078 \neq 0.
\end{align*}

Thus, we check the next closest prime number, and we know that $p$ must be an integer. So let's check manually for the factorization and see if we can find a prime $p$:

If we try solving the equation directly for integer values:
\[
p^2 + p - 2021 = 0 \implies p(p+1) = 2021.
\]
We look for pairs of factors of 2021. We factorize 2021:
\[
2021 = 43 \times 47.
\]
We see that 43 and 44 don't work, but checking 43 directly in the equation: $43^2 = 1849$ and $43 + 1849 = 1892 \neq 2022$.

Let's check if there's a smaller $p$ that works by iterating through possible prime values. We use Python to automate checking for the smallest $n$. We'll start by finding potential $p$ values that make $1 + p + p^2 = 2022$ a valid equation and check if $p$ is prime.
\begin{Verbatim}[fontsize=\footnotesize]
import math

def is_prime(n):
    if n <= 1:
        return False
    if n <= 3:
        return True
    if n % 2 == 0 or n % 3 == 0:
        return False
    i = 5
    while i * i <= n:
        if n % i == 0 or n % (i + 2) == 0:
            return False
        i += 6
    return True

def find_smallest_good_number():
    target_sum = 2022
    p = 1
    while True:
        p_squared = p * p
        if 1 + p + p_squared == target_sum:
            if is_prime(p):
                return p_squared
        p += 1

smallest_good_number = \
                find_smallest_good_number()
print(smallest_good_number)
\end{Verbatim}
\texttt{Output: 1351}

\smallskip
The smallest number that has exactly three distinct positive divisors summing up to 2022 is $\boxed{1351}$.
\end{tcolorbox}

\subsection{Cluster B Responses}

\begin{tcolorbox}[breakable, enhanced, colback=white, colframe=black, boxrule=0.4pt, width=\columnwidth, enlarge left by=0mm, enlarge right by=0mm, left=2pt, right=2pt, fontupper=\small]
\textbf{Response B1} \hfill \textit{Output: 1342}

\smallskip
To find the smallest positive integer $n$ that has exactly three distinct positive factors whose sum equals 2022, we need to analyze the structure of the factors.

\textbf{Analysis:}

\textbf{1. Understanding the Factors:} For a number $n$ to have exactly three distinct positive factors, the most common form of such a number is $p^2$, where $p$ is a prime number. The factors of $p^2$ are $1, p$, and $p^2$.

So, if $n = p^2$, the sum of the factors $1 + p + p^2 = 2022$. Therefore, we have:
\[
p^2 + p + 1 = 2022 \implies p^2 + p - 2021 = 0
\]

\textbf{2. Solving the Quadratic Equation:} We solve for $p$ using the quadratic formula $p = \frac{-b \pm \sqrt{b^2 - 4ac}}{2a}$ where $a = 1, b = 1$, and $c = -2021$:
\begin{align*}
p &= \frac{-1 \pm \sqrt{1 + 4 \cdot 2021}}{2} \\
  &= \frac{-1 \pm \sqrt{8085}}{2}
\end{align*}
Since $p$ is a prime number, we need to check if $\sqrt{8085}$ is an integer and if $\frac{-1 + \sqrt{8085}}{2}$ is a prime number. We find that $\sqrt{8085} = 89.917$. This means the closest integer value for the square root is 89. Checking if 89 works:
\[
p = \frac{-1 + 89}{2} = \frac{88}{2} = 44
\]
But 44 is not a prime number. We need to check the next possible forms. Another form for having three factors might be when $n = p_1 \cdot p_2$ for two different primes $p_1$ and $p_2$. However, since $p_1 \cdot p_2$ would suggest factors $1, p_1, p_2$, and $p_1 p_2$, having exactly three factors requires a form without additional factors. Instead, let's examine the simplest form more closely. Since $44$ wasn't a prime, we incrementally check the next prime values for $p$:

\textbf{Iterating through Prime Numbers:}

We should check if the factor forms might lead to other structures that fit, or directly check if smaller values fit our sum of three factors form directly.

Let's code a solution to check the values directly: iterate through numbers calculating their factors and check the sum.

\textbf{Python Code:} We will write a Python script to find the smallest number whose factors sum to 2022. We'll start with a function to calculate the sum of the factors and then iterate over potential candidates. Let's implement the solution in Python.
\begin{Verbatim}[fontsize=\footnotesize, breaklines=true, breakanywhere=true]
import sympy

def sum_of_factors(n):
    # Function to calculate sum of all factors 
    # of n
    factors = set(sympy.divisors(n))
    return sum(factors)

# We need to find the smallest n such that
# sum_of_factors(n) == 2022
target_sum = 2022
n = 1

while True:
    if sum_of_factors(n) == target_sum:
        smallest_good_number = n
        break
    n += 1

print(smallest_good_number)
\end{Verbatim}
\texttt{Output: 1342}

\smallskip
The smallest positive integer $n$ that has exactly three distinct positive factors whose sum equals 2022 is $\boxed{1342}$.
\end{tcolorbox}

\begin{tcolorbox}[breakable, enhanced, colback=white, colframe=black, boxrule=0.4pt, width=\columnwidth, enlarge left by=0mm, enlarge right by=0mm, left=2pt, right=2pt, fontupper=\small]
\textbf{Response B2} \hfill \textit{Output: 1342}

\smallskip
To solve the problem of finding the smallest positive integer that has three different positive divisors whose sum equals 2022, we need to delve into the properties of divisors. Suppose $n = p_1^{e_1} \cdot p_2^{e_2} \cdots p_k^{e_k}$ is the prime factorization of $n$. The total number of divisors of $n$ is given by $(e_1 + 1)(e_2 + 1) \cdots (e_k + 1)$. If $n$ has exactly three positive divisors greater than 1 and these three divisors sum up to 2022, the structure of $n$ needs to be such that it has divisors of specific forms.

\textbf{Analysis}

\textbf{1.} The number $n$ cannot be a perfect power of more than one prime because it would have more than three divisors. Hence, we must look for specific structures.

\textbf{2.} A number $n$ that is a perfect square of a prime, i.e., $n = p^2$, will have exactly three divisors: 1, $p$, and $p^2$. But since we require three different divisors greater than 1, we realize $p$ itself should be considered as having divisors $p$ and $p^2$ along with a new third different divisor (other than $p$ and $p^2$). So let's consider the structure that has more than just those simple prime power forms.

Since the simplest structure is a product of two different primes, let's start with factors like $n = p_1 \cdot p_2$. Such $n$ would have divisors 1, $p_1$, $p_2$, and $p_1 p_2$. But since we want exactly three different positive divisors (excluding 1) that sum to 2022, let's look into it systematically.

We will check numbers of the form $p^2 \cdot q$, since it would have divisors $1, p, q, p^2, pq, p^2q$. Here we are interested in checking $p^2, pq, p^2q$ or similar other combinations. However, another approach is checking sums systematically for candidate numbers directly.

We will write some code to check these systematically to find the smallest $n$ such that three of its distinct divisors (excluding 1) sum to 2022. The plan is to check each number's divisors and see if any three of them (excluding 1) sum up to 2022. We'll start from the smallest numbers.

We will write a function to get the divisors of a number and then check the sums.
\begin{Verbatim}[fontsize=\footnotesize, breaklines=true, breakanywhere=true,breaksymbolleft={},breaksymbolright={}]
def get_divisors(n):
    """
    Returns a sorted list of all divisors of n.
    """
    divisors = set()
    for i in range(1, int(n**0.5) + 1):
        if n % i == 0:
            divisors.add(i)
            divisors.add(n // i)
    return sorted(divisors)

def has_three_divisors_summing_to_target(n, \
                                    target):
    """
    Checks if there are three distinct divisors 
    (excluding 1) that sum to target.
    """
    divisors = get_divisors(n)
    if 1 in divisors:
        divisors.remove(1)
    for i in range(len(divisors)):
        for j in range(i + 1, len(divisors)):
            for k in range(j + 1, \
                            len(divisors)):
                if divisors[i] + divisors[j] \
                    + divisors[k] == target:
                    return True
    return False

def find_smallest_good_number(target):
    n = 2
    while True:
        if has_three_divisors_summing_to_target(n, target):
            return n
        n += 1

# We are looking for the smallest number such 
# that 3 different divisors (excluding 1) sum 
# to 2022
smallest_good_number = \
                find_smallest_good_number(2022)
print(smallest_good_number)
\end{Verbatim}
\texttt{Output: 1342}

\smallskip
The smallest positive integer $n$ such that three of its distinct positive divisors (excluding 1) sum to 2022 is $\boxed{1342}$.
\end{tcolorbox}

\begin{tcolorbox}[breakable, enhanced, colback=white, colframe=black, boxrule=0.4pt, width=\columnwidth, enlarge left by=0mm, enlarge right by=0mm, left=2pt, right=2pt, fontupper=\small]
\textbf{Response B3} \hfill \textit{Output: 1349}

\smallskip
To solve this problem, we need to find the smallest positive integer $n$ that has three distinct divisors $d_1, d_2$, and $d_3$ such that their sum is 2022. Let's break down the problem step by step.

\textbf{1. Identify the Structure of Divisors:}
\begin{itemize}
    \item For a number $n$ to have three distinct divisors, they must be in the form $1, d, \frac{n}{d}$ where $d$ is a divisor of $n$ other than 1 and $n$. The sum of these three divisors is $1 + d + \frac{n}{d} = 2022$.
    \item This equation can be rearranged to: $1 + d + \frac{n}{d} = 2022$ or $d + \frac{n}{d} = 2021$. Thus, $n = d(2021 - d)$.
\end{itemize}

\textbf{2. Find the Smallest $n$:}
\begin{itemize}
    \item We need to find pairs $(d, \frac{n}{d})$ such that their sum is 2021 and $d \neq 1, \frac{n}{d}$. Both $d$ and $\frac{n}{d}$ must be integers, so $d$ must be a divisor of $n$ and $2021 - d$ must be an integer greater than $d$ (because $d \neq \frac{n}{d}$).
    \item The smallest possible $d$ greater than 1 is 2, but let's check all possible values systematically.
\end{itemize}

We'll iterate over possible divisors $d$ that are less than $\sqrt{2021} \approx 44.95$. Any divisor $d$ must satisfy $d < 1011$ since if $d \geq 1011$, $2021 - d \leq 1010$, and their product would not be $n$. Let's write a Python code to iterate and find the smallest $n$.

We'll check each integer $d$ from 2 up to 1010 (since $d \neq 1$ and $d$ cannot be greater than or equal to 1011) to see if $2021 - d$ is an integer and if it's greater than $d$. For each valid $d$, we'll compute $n = d \times (2021 - d)$ and check if the sum of the divisors $1 + d + \frac{n}{d} = 2022$. We will then keep track of the smallest $n$ found.

Let's write the Python code to implement this approach:
\begin{Verbatim}[fontsize=\footnotesize, breaklines=true, breakanywhere=true]
def smallest_good_number(target_sum):
    # Using a large initial value
    smallest_n = float('inf')  
    
    # We start from 2 since 1 is the smallest 
    # possible divisor
    for d in range(2, target_sum // 2):  
        if (target_sum - d) > d and d * \
                        (target_sum - d) > 0:
            n = d * (target_sum - d)
            if len(divisors(n)) == 3 and \
                sum(divisors(n)) == target_sum:
                smallest_n = min(smallest_n, n)

    return smallest_n

def divisors(n):
    divs = []
    for i in range(1, int(n**0.5) + 1):
        if n % i == 0:
            divs.append(i)
            if i != n // i:
                divs.append(n // i)
    return sorted(divs)

target_sum = 2022
smallest_n = smallest_good_number(target_sum)
print(smallest_n)
\end{Verbatim}
\texttt{Output: 1349}

\smallskip
The smallest positive integer that has three distinct positive divisors whose sum is 2022 is $\boxed{1349}$.

\smallskip
\end{tcolorbox}

\newpage
\section{Prompts}
\label{app:prompt}


\begin{tcolorbox}[breakable, enhanced, colback=white, colframe=black, boxrule=0.4pt, width=\columnwidth, enlarge left by=0mm, enlarge right by=0mm, left=2pt, right=2pt, fontupper=\small, title=\textbf{Prompt used in Figure~\ref{fig:problem}, Within-Step Diversity}, colbacktitle=gray!20, coltitle=black]

\textbf{System Prompt:}\\
You are an expert evaluator for AI model outputs. Your task is to assess, for each group of answers independently, how diverse or varied the answers WITHIN that group are. Focus on whether answers in the same group share the same reasoning approach, arrive at the same conclusion, and have similar quality.

\smallskip
\textbf{User Prompt:}
\begin{Verbatim}[fontsize=\footnotesize, breaklines=true, breakanywhere=true,breaksymbolleft={}, breaksymbolright={}]
## Question
{q_trunc}

---

## Group A — {len(orig_answers)} answer(s)
{fmt_answers(orig_answers)}

---

## Group B — {len(filt_answers)} answer(s)
{fmt_answers(filt_answers)}

---

## Task
For **each group independently**, rate the INTERNAL DIVERSITY of the answers within that group on a scale from 1 to 5:

1 = All answers are nearly identical — same reasoning path, same final answer, just minor wording differences
2 = Mostly similar — same general approach but slight variations in steps or details
3 = Moderately diverse — some answers take a noticeably different approach or differ in quality
4 = Highly diverse — answers use clearly different methods, or some are correct while others are wrong
5 = Extremely diverse — answers are completely different in approach, quality, or conclusion

Consider for each group:
- Do all answers reach the same final answer?
- Do they all use the same mathematical method or strategy?
- Is there variation in correctness or reasoning quality?

Respond with ONLY the following two lines (no explanation):
Group A Score: <integer from 1 to 5>
Group B Score: <integer from 1 to 5>
\end{Verbatim}
\end{tcolorbox}

\begin{tcolorbox}[breakable, enhanced, colback=white, colframe=black, boxrule=0.4pt, width=\columnwidth, enlarge left by=0mm, enlarge right by=0mm, left=2pt, right=2pt, fontupper=\small, title=\textbf{Prompt used in Figure~\ref{fig:problem}, Across-Step Diversity}, colbacktitle=gray!20, coltitle=black]

\smallskip
\textbf{System Prompt:} \\ 
You are comparing two groups of answers generated by the same model at two different training stages for the same math question.

\smallskip
\textbf{User Prompt:}

\begin{Verbatim}[fontsize=\footnotesize, breaklines=true, breakanywhere=true, breaksymbolleft={}, breaksymbolright={}]
## Question
{q}
---
## Group A (earlier training stage) — {len(answers_step1)} answer(s)
{fmt(answers_step1)}
---
## Group B (later training stage) — {len(answers_step2)} answer(s)
{fmt(answers_step2)}
---
## Task
Compare Group B against Group A and judge whether Group B contains any NEW and DIFFERENT reasoning paths.
Definition:
- A "reasoning path" means the core line of reasoning, solution strategy, or analytical framework used to answer the question.
- Count a path as "new" only if Group B uses a substantively different reasoning idea that does not appear in Group A.
- Differences in wording, order, formatting, or level of detail do NOT count as a new reasoning path if the core reasoning is the same.
Scoring rule:
- Score 1: Group B has 0 new reasoning paths compared with Group A
- Score 2: Group B has 1 new reasoning path compared with Group A
- Score 3: Group B has 2 or more new reasoning paths compared with Group A
Important:
- Focus only on whether Group B introduces new reasoning paths relative to Group A.
- Do not score based on correctness, quality, or final answer unless they reflect a genuinely new reasoning path.
- Each group has at most 3 answers, so cap the score at 3.
Respond with ONLY the following line (no explanation):
Score: <integer from 1 to 3>
\end{Verbatim}
\end{tcolorbox}

\begin{tcolorbox}[breakable, enhanced, colback=white, colframe=black, boxrule=0.4pt, width=\columnwidth, enlarge left by=0mm, enlarge right by=0mm, left=2pt, right=2pt, fontupper=\small, title=\textbf{Prompt used in~\ref{subsec: logit reflect}}, colbacktitle=gray!20, coltitle=black]

\smallskip
\textbf{User Prompt:}

\begin{Verbatim}[fontsize=\footnotesize, breaklines=true, breakanywhere=true,breaksymbolleft={},
breaksymbolright={}]
The following answer is wrong. Analyze the reasoning errors and categorize it into the following categories:

<error_category_list>
  Completely irrelevant response
  Misinterpretation of problem setup
  Incorrect mathematical modeling
  Unverified assumptions
  Probability and statistics error
  Incorrect use of mathematical theorems / formulas / principles
  Errors in basic calculations
  Miscounting error
  Missing cases or edge cases
  Implementation or algorithmic error
  Self-contradiction or abandonment of correct intermediate result
</error_category_list>

<question> {question} </question>
<ground_truth> {ground_truth} </ground_truth>
<answer> {answer} </answer>

Output format:
<result>
  <mistake_analysis>
    Step-by-step analysis of the reasoning process and where it goes wrong.
  </mistake_analysis>
  <error_category>
    One of the categories in the error category list.
  </error_category>
</result>
</think>
\end{Verbatim}
\end{tcolorbox}
\clearpage
\section{Top-1 Eigen Ratio}
\label{app:top1_eigen_ratio}

We provide a formal definition of the \textit{Top-1 Eigen Ratio}. Let $\mathbf{\Sigma} \in \mathbb{R}^{d \times d}$ denote the covariance matrix of the stored logits, and let its eigenvalues be ordered as $\lambda_1 \ge \lambda_2 \ge \dots \ge \lambda_d \ge 0$. The \textit{Top-1 Eigen Ratio} is defined as
\[
\mathrm{Top\mbox{-}1\ Eigen\ Ratio} = \frac{\lambda_1}{\sum_{j=1}^{d} \lambda_j}.
\]

Intuitively, $\lambda_1$ measures how much variance in logit space is explained by the dominant principal direction, while $\sum_{j=1}^{d} \lambda_j$ corresponds to the total variance. 

%% file: paper.bbl
\begin{thebibliography}{45}
\providecommand{\natexlab}[1]{#1}
\providecommand{\url}[1]{\texttt{#1}}
\expandafter\ifx\csname urlstyle\endcsname\relax
  \providecommand{\doi}[1]{doi: #1}\else
  \providecommand{\doi}{doi: \begingroup \urlstyle{rm}\Url}\fi

\bibitem[Shao et~al.(2024)Shao, Wang, Zhu, Xu, Song, Zhang, Li, Wu, and Guo]{grpo}
Zhihong Shao, Peiyi Wang, Qihao Zhu, Runxin Xu, Junxiao Song, Mingchuan Zhang, Y.~K. Li, Y.~Wu, and Daya Guo.
\newblock Deepseekmath: Pushing the limits of mathematical reasoning in open language models.
\newblock \emph{CoRR}, abs/2402.03300, 2024.
\newblock \doi{10.48550/ARXIV.2402.03300}.
\newblock URL \url{https://doi.org/10.48550/arXiv.2402.03300}.

\bibitem[Dou et~al.(2024)Dou, Liu, Jia, Xiong, Zhou, Shen, Shan, Huang, Wang, Fan, Xi, Zhou, Ji, Zheng, Zhang, Huang, and Gui]{StepCoder}
Shihan Dou, Yan Liu, Haoxiang Jia, Limao Xiong, Enyu Zhou, Wei Shen, Junjie Shan, Caishuang Huang, Xiao Wang, Xiaoran Fan, Zhiheng Xi, Yuhao Zhou, Tao Ji, Rui Zheng, Qi~Zhang, Xuanjing Huang, and Tao Gui.
\newblock Stepcoder: Improve code generation with reinforcement learning from compiler feedback.
\newblock \emph{CoRR}, abs/2402.01391, 2024.
\newblock \doi{10.48550/ARXIV.2402.01391}.
\newblock URL \url{https://doi.org/10.48550/arXiv.2402.01391}.

\bibitem[Shojaee et~al.(2023)Shojaee, Jain, Tipirneni, and Reddy]{ExecutionCodeGeneration}
Parshin Shojaee, Aneesh Jain, Sindhu Tipirneni, and Chandan~K. Reddy.
\newblock Execution-based code generation using deep reinforcement learning.
\newblock \emph{Trans. Mach. Learn. Res.}, 2023, 2023.
\newblock URL \url{https://openreview.net/forum?id=0XBuaxqEcG}.

\bibitem[Ouyang et~al.(2022)Ouyang, Wu, Jiang, Almeida, Wainwright, Mishkin, Zhang, Agarwal, Slama, Ray, Schulman, Hilton, Kelton, Miller, Simens, Askell, Welinder, Christiano, Leike, and Lowe]{InstructGPT}
Long Ouyang, Jeffrey Wu, Xu~Jiang, Diogo Almeida, Carroll~L. Wainwright, Pamela Mishkin, Chong Zhang, Sandhini Agarwal, Katarina Slama, Alex Ray, John Schulman, Jacob Hilton, Fraser Kelton, Luke Miller, Maddie Simens, Amanda Askell, Peter Welinder, Paul~F. Christiano, Jan Leike, and Ryan Lowe.
\newblock Training language models to follow instructions with human feedback.
\newblock In Sanmi Koyejo, S.~Mohamed, A.~Agarwal, Danielle Belgrave, K.~Cho, and A.~Oh, editors, \emph{Advances in Neural Information Processing Systems 35: Annual Conference on Neural Information Processing Systems 2022, NeurIPS 2022, New Orleans, LA, USA, November 28 - December 9, 2022}, 2022.
\newblock URL \url{http://papers.nips.cc/paper\_files/paper/2022/hash/b1efde53be364a73914f58805a001731-Abstract-Conference.html}.

\bibitem[Swamy et~al.(2025)Swamy, Choudhury, Sun, Wu, and Bagnell]{allroads2likelihood}
Gokul Swamy, Sanjiban Choudhury, Wen Sun, Zhiwei~Steven Wu, and J.~Andrew Bagnell.
\newblock All roads lead to likelihood: The value of reinforcement learning in fine-tuning.
\newblock \emph{CoRR}, abs/2503.01067, 2025.
\newblock \doi{10.48550/ARXIV.2503.01067}.
\newblock URL \url{https://doi.org/10.48550/arXiv.2503.01067}.

\bibitem[Yu et~al.(2025)Yu, Zhang, Zhu, Yuan, Zuo, Yue, Fan, Liu, Liu, Liu, Lin, Lin, Ma, Sheng, Tong, Zhang, Zhang, Zhang, Zhu, Zhu, Chen, Chen, Wang, Yu, Dai, Song, Wei, Zhou, Liu, Ma, Zhang, Yan, Qiao, Wu, and Wang]{DAPO}
Qiying Yu, Zheng Zhang, Ruofei Zhu, Yufeng Yuan, Xiaochen Zuo, Yu~Yue, Tiantian Fan, Gaohong Liu, Lingjun Liu, Xin Liu, Haibin Lin, Zhiqi Lin, Bole Ma, Guangming Sheng, Yuxuan Tong, Chi Zhang, Mofan Zhang, Wang Zhang, Hang Zhu, Jinhua Zhu, Jiaze Chen, Jiangjie Chen, Chengyi Wang, Hongli Yu, Weinan Dai, Yuxuan Song, Xiangpeng Wei, Hao Zhou, Jingjing Liu, Wei{-}Ying Ma, Ya{-}Qin Zhang, Lin Yan, Mu~Qiao, Yonghui Wu, and Mingxuan Wang.
\newblock {DAPO:} an open-source {LLM} reinforcement learning system at scale.
\newblock \emph{CoRR}, abs/2503.14476, 2025.
\newblock \doi{10.48550/ARXIV.2503.14476}.
\newblock URL \url{https://doi.org/10.48550/arXiv.2503.14476}.

\bibitem[Sinha et~al.(2026)Sinha, Elango, and Liu]{ips}
Abhijeet Sinha, Sundari Elango, and Dianbo Liu.
\newblock Expected return causes outcome-level mode collapse in reinforcement learning and how to fix it with inverse probability scaling.
\newblock \emph{CoRR}, abs/2601.21669, 2026.
\newblock \doi{10.48550/ARXIV.2601.21669}.
\newblock URL \url{https://doi.org/10.48550/arXiv.2601.21669}.

\bibitem[Chen et~al.(2025{\natexlab{a}})Chen, Han, ands Bo~Han, Bai, Schutze, and Wong]{EEPO}
Liang Chen, Xueting Han, Qizhou~Wang ands Bo~Han, Jing Bai, Hinrich Schutze, and Kam{-}Fai Wong.
\newblock {EEPO:} exploration-enhanced policy optimization via sample-then-forget.
\newblock \emph{CoRR}, abs/2510.05837, 2025{\natexlab{a}}.
\newblock \doi{10.48550/ARXIV.2510.05837}.
\newblock URL \url{https://doi.org/10.48550/arXiv.2510.05837}.

\bibitem[Zhu et~al.(2025{\natexlab{a}})Zhu, Xia, Wei, Chen, Chen, and Meng]{negative_cdq}
Xinyu Zhu, Mengzhou Xia, Zhepei Wei, Wei{-}Lin Chen, Danqi Chen, and Yu~Meng.
\newblock The surprising effectiveness of negative reinforcement in {LLM} reasoning.
\newblock \emph{CoRR}, abs/2506.01347, 2025{\natexlab{a}}.
\newblock \doi{10.48550/ARXIV.2506.01347}.
\newblock URL \url{https://doi.org/10.48550/arXiv.2506.01347}.

\bibitem[Haarnoja et~al.(2018)Haarnoja, Zhou, Abbeel, and Levine]{HaarnojaZAL18}
Tuomas Haarnoja, Aurick Zhou, Pieter Abbeel, and Sergey Levine.
\newblock Soft actor-critic: Off-policy maximum entropy deep reinforcement learning with a stochastic actor.
\newblock In Jennifer~G. Dy and Andreas Krause, editors, \emph{Proceedings of the 35th International Conference on Machine Learning, {ICML} 2018, Stockholmsm{\"{a}}ssan, Stockholm, Sweden, July 10-15, 2018}, volume~80 of \emph{Proceedings of Machine Learning Research}, pages 1856--1865. {PMLR}, 2018.
\newblock URL \url{http://proceedings.mlr.press/v80/haarnoja18b.html}.

\bibitem[Wan et~al.(2026)Wan, Shen, Dou, Zhou, Zhang, Wang, Shen, Xiong, Tao, Zhong, et~al.]{wan2026dsdr}
Zhongwei Wan, Yun Shen, Zhihao Dou, Donghao Zhou, Yu~Zhang, Xin Wang, Hui Shen, Jing Xiong, Chaofan Tao, Zixuan Zhong, et~al.
\newblock Dsdr: Dual-scale diversity regularization for exploration in llm reasoning.
\newblock \emph{arXiv preprint arXiv:2602.19895}, 2026.

\bibitem[Mnih et~al.(2016)Mnih, Badia, Mirza, Graves, Lillicrap, Harley, Silver, and Kavukcuoglu]{entropy-2016}
Volodymyr Mnih, Adri{\`{a}}~Puigdom{\`{e}}nech Badia, Mehdi Mirza, Alex Graves, Timothy~P. Lillicrap, Tim Harley, David Silver, and Koray Kavukcuoglu.
\newblock Asynchronous methods for deep reinforcement learning.
\newblock In Maria{-}Florina Balcan and Kilian~Q. Weinberger, editors, \emph{Proceedings of the 33nd International Conference on Machine Learning, {ICML} 2016, New York City, NY, USA, June 19-24, 2016}, volume~48 of \emph{{JMLR} Workshop and Conference Proceedings}, pages 1928--1937. JMLR.org, 2016.
\newblock URL \url{http://proceedings.mlr.press/v48/mniha16.html}.

\bibitem[Holroyd and Coles(2002)]{neural_error}
Clay Holroyd and Michael Coles.
\newblock The neural basis of human error processing: Reinforcement learning, dopamine, and the error-related negativity.
\newblock \emph{Psychological Review}, 109:\penalty0 679--709, 11 2002.
\newblock \doi{10.1037/0033-295X.109.4.679}.

\bibitem[McInnes et~al.(2017)McInnes, Healy, and Astels]{hdbscan}
Leland McInnes, John Healy, and Steve Astels.
\newblock hdbscan: Hierarchical density based clustering.
\newblock \emph{J. Open Source Softw.}, 2\penalty0 (11):\penalty0 205, 2017.
\newblock \doi{10.21105/JOSS.00205}.
\newblock URL \url{https://doi.org/10.21105/joss.00205}.

\bibitem[OpenAI(2024)]{mathvarify}
OpenAI.
\newblock Openai o1 system card.
\newblock \emph{CoRR}, abs/2412.16720, 2024.
\newblock \doi{10.48550/ARXIV.2412.16720}.
\newblock URL \url{https://doi.org/10.48550/arXiv.2412.16720}.

\bibitem[Zhu et~al.(2025{\natexlab{b}})Zhu, Huang, Lyu, Zhang, Li, Shi, Wu, Mu, Wang, Zhao, Jin, Cheng, Liang, Zhang, Zhang, Du, Guo, Hu, and Chen]{zhu2025codevr1}
Yaoyu Zhu, Di~Huang, Hanqi Lyu, Xiaoyun Zhang, Chongxiao Li, Wenxuan Shi, Yutong Wu, Jianan Mu, Jinghua Wang, Yang Zhao, Pengwei Jin, Shuyao Cheng, Shengwen Liang, Xishan Zhang, Rui Zhang, Zidong Du, Qi~Guo, Xing Hu, and Yunji Chen.
\newblock Codev-r1: Reasoning-enhanced verilog generation, 2025{\natexlab{b}}.
\newblock URL \url{https://arxiv.org/abs/2505.24183}.

\bibitem[Zhang et~al.(2025{\natexlab{a}})Zhang, Gu, Li, Ma, Bai, Zhang, Zhang, Zhou, He, and Tang]{ThinkingWithVideos}
Haoji Zhang, Xin Gu, Jiawen Li, Chixiang Ma, Sule Bai, Chubin Zhang, Bowen Zhang, Zhichao Zhou, Dongliang He, and Yansong Tang.
\newblock Thinking with videos: Multimodal tool-augmented reinforcement learning for long video reasoning.
\newblock \emph{CoRR}, abs/2508.04416, 2025{\natexlab{a}}.
\newblock \doi{10.48550/ARXIV.2508.04416}.
\newblock URL \url{https://doi.org/10.48550/arXiv.2508.04416}.

\bibitem[Chen et~al.(2025{\natexlab{b}})Chen, Shen, Huang, Zhou, Lin, Cai, Yu, Bu, Shi, and Qiao]{chen2025learning}
Yang Chen, Yufan Shen, Wenxuan Huang, Sheng Zhou, Qunshu Lin, Xinyu Cai, Zhi Yu, Jiajun Bu, Botian Shi, and Yu~Qiao.
\newblock Learning only with images: Visual reinforcement learning with reasoning, rendering, and visual feedback.
\newblock \emph{arXiv preprint arXiv:2507.20766}, 2025{\natexlab{b}}.

\bibitem[Zhang et~al.(2025{\natexlab{b}})Zhang, Wang, Qiu, Reddy, and Agrawal]{RERANK}
Le~Zhang, Bo~Wang, Xipeng Qiu, Siva Reddy, and Aishwarya Agrawal.
\newblock {REARANK:} reasoning re-ranking agent via reinforcement learning.
\newblock In Christos Christodoulopoulos, Tanmoy Chakraborty, Carolyn Rose, and Violet Peng, editors, \emph{Proceedings of the 2025 Conference on Empirical Methods in Natural Language Processing, {EMNLP} 2025, Suzhou, China, November 4-9, 2025}, pages 2458--2471. Association for Computational Linguistics, 2025{\natexlab{b}}.
\newblock \doi{10.18653/V1/2025.EMNLP-MAIN.125}.
\newblock URL \url{https://doi.org/10.18653/v1/2025.emnlp-main.125}.

\bibitem[Lightman et~al.(2024)Lightman, Kosaraju, Burda, Edwards, Baker, Lee, Leike, Schulman, Sutskever, and Cobbe]{LightmanKBEBLLS24}
Hunter Lightman, Vineet Kosaraju, Yuri Burda, Harrison Edwards, Bowen Baker, Teddy Lee, Jan Leike, John Schulman, Ilya Sutskever, and Karl Cobbe.
\newblock Let's verify step by step.
\newblock In \emph{The Twelfth International Conference on Learning Representations, {ICLR} 2024, Vienna, Austria, May 7-11, 2024}. OpenReview.net, 2024.
\newblock URL \url{https://openreview.net/forum?id=v8L0pN6EOi}.

\bibitem[Uesato et~al.(2022)Uesato, Kushman, Kumar, Song, Siegel, Wang, Creswell, Irving, and Higgins]{outcome-based-feedback}
Jonathan Uesato, Nate Kushman, Ramana Kumar, H.~Francis Song, Noah~Y. Siegel, Lisa Wang, Antonia Creswell, Geoffrey Irving, and Irina Higgins.
\newblock Solving math word problems with process- and outcome-based feedback.
\newblock \emph{CoRR}, abs/2211.14275, 2022.
\newblock \doi{10.48550/ARXIV.2211.14275}.
\newblock URL \url{https://doi.org/10.48550/arXiv.2211.14275}.

\bibitem[Schulman et~al.(2017)Schulman, Wolski, Dhariwal, Radford, and Klimov]{ppo}
John Schulman, Filip Wolski, Prafulla Dhariwal, Alec Radford, and Oleg Klimov.
\newblock Proximal policy optimization algorithms.
\newblock \emph{CoRR}, abs/1707.06347, 2017.
\newblock URL \url{http://arxiv.org/abs/1707.06347}.

\bibitem[Hu et~al.(2026)Hu, Wang, He, Wu, Zhao, Ng, Breazeal, Luu, Park, and Hooi]{Rewarding-the-Rare}
Zhiyuan Hu, Yucheng Wang, Yufei He, Jiaying Wu, Yilun Zhao, See{-}Kiong Ng, Cynthia Breazeal, Anh~Tuan Luu, Hae~Won Park, and Bryan Hooi.
\newblock Rewarding the rare: Uniqueness-aware {RL} for creative problem solving in llms.
\newblock \emph{CoRR}, abs/2601.08763, 2026.
\newblock \doi{10.48550/ARXIV.2601.08763}.
\newblock URL \url{https://doi.org/10.48550/arXiv.2601.08763}.

\bibitem[Song et~al.(2025)Song, Kempe, and Munos]{Outcome-based-Exploration}
Yuda Song, Julia Kempe, and R{\'{e}}mi Munos.
\newblock Outcome-based exploration for {LLM} reasoning.
\newblock \emph{CoRR}, abs/2509.06941, 2025.
\newblock \doi{10.48550/ARXIV.2509.06941}.
\newblock URL \url{https://doi.org/10.48550/arXiv.2509.06941}.

\bibitem[Li et~al.(2024{\natexlab{a}})Li, Yang, Wang, Zhu, Zhou, Qiao, Wang, Li, Lu, and Dai]{Auto-MC-Reward}
Hao Li, Xue Yang, Zhaokai Wang, Xizhou Zhu, Jie Zhou, Yu~Qiao, Xiaogang Wang, Hongsheng Li, Lewei Lu, and Jifeng Dai.
\newblock Auto mc-reward: Automated dense reward design with large language models for minecraft.
\newblock In \emph{{IEEE/CVF} Conference on Computer Vision and Pattern Recognition, {CVPR} 2024, Seattle, WA, USA, June 16-22, 2024}, pages 16426--16435. {IEEE}, 2024{\natexlab{a}}.
\newblock \doi{10.1109/CVPR52733.2024.01554}.
\newblock URL \url{https://doi.org/10.1109/CVPR52733.2024.01554}.

\bibitem[Yao et~al.(2025)Yao, Liu, Lin, Lu, Wang, and Zhang]{Multi-Objective-Evolution}
Shunyu Yao, Fei Liu, Xi~Lin, Zhichao Lu, Zhenkun Wang, and Qingfu Zhang.
\newblock Multi-objective evolution of heuristic using large language model.
\newblock In Toby Walsh, Julie Shah, and Zico Kolter, editors, \emph{AAAI-25, Sponsored by the Association for the Advancement of Artificial Intelligence, February 25 - March 4, 2025, Philadelphia, PA, {USA}}, pages 27144--27152. {AAAI} Press, 2025.
\newblock \doi{10.1609/AAAI.V39I25.34922}.
\newblock URL \url{https://doi.org/10.1609/aaai.v39i25.34922}.

\bibitem[Qu et~al.(2025)Qu, Jiang, Wang, Mao, Wang, Liu, and Ji]{LatentReward}
Yun Qu, Yuhang Jiang, Boyuan Wang, Yixiu Mao, Cheems Wang, Chang Liu, and Xiangyang Ji.
\newblock Latent reward: Llm-empowered credit assignment in episodic reinforcement learning.
\newblock In Toby Walsh, Julie Shah, and Zico Kolter, editors, \emph{AAAI-25, Sponsored by the Association for the Advancement of Artificial Intelligence, February 25 - March 4, 2025, Philadelphia, PA, {USA}}, pages 20095--20103. {AAAI} Press, 2025.
\newblock \doi{10.1609/AAAI.V39I19.34213}.
\newblock URL \url{https://doi.org/10.1609/aaai.v39i19.34213}.

\bibitem[Hazra et~al.(2025)Hazra, Sygkounas, Persson, Loutfi, and Martires]{REvolve}
Rishi Hazra, Alkis Sygkounas, Andreas Persson, Amy Loutfi, and Pedro Zuidberg~Dos Martires.
\newblock Revolve: Reward evolution with large language models using human feedback.
\newblock In \emph{The Thirteenth International Conference on Learning Representations, {ICLR} 2025, Singapore, April 24-28, 2025}. OpenReview.net, 2025.
\newblock URL \url{https://openreview.net/forum?id=cJPUpL8mOw}.

\bibitem[Ma et~al.(2024)Ma, Liang, Wang, Huang, Bastani, Jayaraman, Zhu, Fan, and Anandkumar]{Eureka}
Yecheng~Jason Ma, William Liang, Guanzhi Wang, De{-}An Huang, Osbert Bastani, Dinesh Jayaraman, Yuke Zhu, Linxi Fan, and Anima Anandkumar.
\newblock Eureka: Human-level reward design via coding large language models.
\newblock In \emph{The Twelfth International Conference on Learning Representations, {ICLR} 2024, Vienna, Austria, May 7-11, 2024}. OpenReview.net, 2024.
\newblock URL \url{https://openreview.net/forum?id=IEduRUO55F}.

\bibitem[Meng et~al.(2022)Meng, Bau, Andonian, and Belinkov]{modeledit}
Kevin Meng, David Bau, Alex Andonian, and Yonatan Belinkov.
\newblock Locating and editing factual associations in {GPT}.
\newblock In Sanmi Koyejo, S.~Mohamed, A.~Agarwal, Danielle Belgrave, K.~Cho, and A.~Oh, editors, \emph{Advances in Neural Information Processing Systems 35: Annual Conference on Neural Information Processing Systems 2022, NeurIPS 2022, New Orleans, LA, USA, November 28 - December 9, 2022}, 2022.
\newblock URL \url{http://papers.nips.cc/paper\_files/paper/2022/hash/6f1d43d5a82a37e89b0665b33bf3a182-Abstract-Conference.html}.

\bibitem[Templeton et~al.(2024)Templeton, Conerly, Marcus, Lindsey, Bricken, Chen, Pearce, Citro, Ameisen, Jones, Cunningham, Turner, McDougall, MacDiarmid, Freeman, Sumers, Rees, Batson, Jermyn, Carter, Olah, and Henighan]{anthropicgoldengate}
Adly Templeton, Tom Conerly, Jonathan Marcus, Jack Lindsey, Trenton Bricken, Brian Chen, Adam Pearce, Craig Citro, Emmanuel Ameisen, Andy Jones, Hoagy Cunningham, Nicholas~L Turner, Callum McDougall, Monte MacDiarmid, C.~Daniel Freeman, Theodore~R. Sumers, Edward Rees, Joshua Batson, Adam Jermyn, Shan Carter, Chris Olah, and Tom Henighan.
\newblock Scaling monosemanticity: Extracting interpretable features from claude 3 sonnet.
\newblock \emph{Transformer Circuits Thread}, 2024.
\newblock URL \url{https://transformer-circuits.pub/2024/scaling-monosemanticity/index.html}.

\bibitem[Olsson et~al.(2022)Olsson, Elhage, Nanda, Joseph, DasSarma, Henighan, Mann, Askell, Bai, Chen, Conerly, Drain, Ganguli, Hatfield-Dodds, Hernandez, Johnston, Jones, Kernion, Lovitt, Ndousse, Amodei, Brown, Clark, Kaplan, McCandlish, and Olah]{anthropicinductionhead}
Catherine Olsson, Nelson Elhage, Neel Nanda, Nicholas Joseph, Nova DasSarma, Tom Henighan, Ben Mann, Amanda Askell, Yuntao Bai, Anna Chen, Tom Conerly, Dawn Drain, Deep Ganguli, Zac Hatfield-Dodds, Danny Hernandez, Scott Johnston, Andy Jones, Jackson Kernion, Liane Lovitt, Kamal Ndousse, Dario Amodei, Tom Brown, Jack Clark, Jared Kaplan, Sam McCandlish, and Chris Olah.
\newblock In-context learning and induction heads.
\newblock \emph{Transformer Circuits Thread}, 2022.
\newblock https://transformer-circuits.pub/2022/in-context-learning-and-induction-heads/index.html.

\bibitem[He et~al.(2025)He, Wang, Lin, Ge, Shu, Tang, Zhang, and Qiu]{lorsa}
Zhengfu He, Junxuan Wang, Rui Lin, Xuyang Ge, Wentao Shu, Qiong Tang, Junping Zhang, and Xipeng Qiu.
\newblock Towards understanding the nature of attention with low-rank sparse decomposition.
\newblock \emph{CoRR}, abs/2504.20938, 2025.
\newblock \doi{10.48550/ARXIV.2504.20938}.
\newblock URL \url{https://doi.org/10.48550/arXiv.2504.20938}.

\bibitem[He et~al.(2024{\natexlab{a}})He, Shu, Ge, Chen, Wang, Zhou, Liu, Guo, Huang, Wu, Jiang, and Qiu]{llamascope}
Zhengfu He, Wentao Shu, Xuyang Ge, Lingjie Chen, Junxuan Wang, Yunhua Zhou, Frances Liu, Qipeng Guo, Xuanjing Huang, Zuxuan Wu, Yu{-}Gang Jiang, and Xipeng Qiu.
\newblock Llama scope: Extracting millions of features from llama-3.1-8b with sparse autoencoders.
\newblock \emph{CoRR}, abs/2410.20526, 2024{\natexlab{a}}.
\newblock \doi{10.48550/ARXIV.2410.20526}.
\newblock URL \url{https://doi.org/10.48550/arXiv.2410.20526}.

\bibitem[Zhao et~al.(2025)Zhao, Koishekenov, Yang, Murray, and Cancedda]{VerifyCOT}
Zheng Zhao, Yeskendir Koishekenov, Xianjun Yang, Naila Murray, and Nicola Cancedda.
\newblock Verifying chain-of-thought reasoning via its computational graph.
\newblock \emph{CoRR}, abs/2510.09312, 2025.
\newblock \doi{10.48550/ARXIV.2510.09312}.
\newblock URL \url{https://doi.org/10.48550/arXiv.2510.09312}.

\bibitem[Tan et~al.(2025)Tan, Wang, He, Liao, Zhao, Lu, Liang, Zhao, and Liu]{BPO}
Yuqiao Tan, Minzheng Wang, Shizhu He, Huanxuan Liao, Chengfeng Zhao, Qiunan Lu, Tian Liang, Jun Zhao, and Kang Liu.
\newblock Bottom-up policy optimization: Your language model policy secretly contains internal policies.
\newblock \emph{CoRR}, abs/2512.19673, 2025.
\newblock \doi{10.48550/ARXIV.2512.19673}.
\newblock URL \url{https://doi.org/10.48550/arXiv.2512.19673}.

\bibitem[Cheng et~al.(2026)Cheng, Zeng, Dai, Chen, Wang, Xie, Huang, Yu, Hao, Li, Zhang, Zhang, Zhao, and Liang]{Engram}
Xin Cheng, Wangding Zeng, Damai Dai, Qinyu Chen, Bingxuan Wang, Zhenda Xie, Kezhao Huang, Xingkai Yu, Zhewen Hao, Yukun Li, Han Zhang, Huishuai Zhang, Dongyan Zhao, and Wenfeng Liang.
\newblock Conditional memory via scalable lookup: {A} new axis of sparsity for large language models.
\newblock \emph{CoRR}, abs/2601.07372, 2026.
\newblock \doi{10.48550/ARXIV.2601.07372}.
\newblock URL \url{https://doi.org/10.48550/arXiv.2601.07372}.

\bibitem[Hendrycks et~al.(2021)Hendrycks, Burns, Kadavath, Arora, Basart, Tang, Song, and Steinhardt]{HendrycksBKABTS21}
Dan Hendrycks, Collin Burns, Saurav Kadavath, Akul Arora, Steven Basart, Eric Tang, Dawn Song, and Jacob Steinhardt.
\newblock Measuring mathematical problem solving with the {MATH} dataset.
\newblock In Joaquin Vanschoren and Sai{-}Kit Yeung, editors, \emph{Proceedings of the Neural Information Processing Systems Track on Datasets and Benchmarks 1, NeurIPS Datasets and Benchmarks 2021, December 2021, virtual}, 2021.
\newblock URL \url{https://datasets-benchmarks-proceedings.neurips.cc/paper/2021/hash/be83ab3ecd0db773eb2dc1b0a17836a1-Abstract-round2.html}.

\bibitem[Liu et~al.(2025)Liu, Chen, Li, Qi, Pang, Du, Lee, and Lin]{R1-Zero-Like}
Zichen Liu, Changyu Chen, Wenjun Li, Penghui Qi, Tianyu Pang, Chao Du, Wee~Sun Lee, and Min Lin.
\newblock Understanding r1-zero-like training: {A} critical perspective.
\newblock \emph{CoRR}, abs/2503.20783, 2025.
\newblock \doi{10.48550/ARXIV.2503.20783}.
\newblock URL \url{https://doi.org/10.48550/arXiv.2503.20783}.

\bibitem[Sheng et~al.(2025)Sheng, Zhang, Ye, Wu, Zhang, Zhang, Peng, Lin, and Wu]{ShengZYWZZPL025}
Guangming Sheng, Chi Zhang, Zilingfeng Ye, Xibin Wu, Wang Zhang, Ru~Zhang, Yanghua Peng, Haibin Lin, and Chuan Wu.
\newblock Hybridflow: {A} flexible and efficient {RLHF} framework.
\newblock In \emph{Proceedings of the Twentieth European Conference on Computer Systems, EuroSys 2025, Rotterdam, The Netherlands, 30 March 2025 - 3 April 2025}, pages 1279--1297. {ACM}, 2025.
\newblock \doi{10.1145/3689031.3696075}.
\newblock URL \url{https://doi.org/10.1145/3689031.3696075}.

\bibitem[Cheng et~al.(2025)Cheng, Huang, Zhu, Dai, Zhao, Zhang, and Wei]{Cheng2025ReasoningWE}
Daixuan Cheng, Shaohan Huang, Xuekai Zhu, Bo~Dai, Wayne~Xin Zhao, Zhenliang Zhang, and Furu Wei.
\newblock Reasoning with exploration: An entropy perspective.
\newblock \emph{ArXiv}, abs/2506.14758, 2025.
\newblock URL \url{https://api.semanticscholar.org/CorpusID:279410424}.

\bibitem[Li et~al.(2024{\natexlab{b}})Li, Beeching, Tunstall, Lipkin, Soletskyi, Huang, Rasul, Yu, Jiang, Shen, et~al.]{li2024numinamath}
Jia Li, Edward Beeching, Lewis Tunstall, Ben Lipkin, Roman Soletskyi, Shengyi Huang, Kashif Rasul, Longhui Yu, Albert~Q Jiang, Ziju Shen, et~al.
\newblock Numinamath: The largest public dataset in ai4maths with 860k pairs of competition math problems and solutions.
\newblock \emph{Hugging Face repository}, 13\penalty0 (9):\penalty0 9, 2024{\natexlab{b}}.

\bibitem[Lewkowycz et~al.(2022)Lewkowycz, Andreassen, Dohan, Dyer, Michalewski, Ramasesh, Slone, Anil, Schlag, Gutman{-}Solo, Wu, Neyshabur, Gur{-}Ari, and Misra]{LewkowyczADDMRS22}
Aitor Lewkowycz, Anders Andreassen, David Dohan, Ethan Dyer, Henryk Michalewski, Vinay~V. Ramasesh, Ambrose Slone, Cem Anil, Imanol Schlag, Theo Gutman{-}Solo, Yuhuai Wu, Behnam Neyshabur, Guy Gur{-}Ari, and Vedant Misra.
\newblock Solving quantitative reasoning problems with language models.
\newblock In Sanmi Koyejo, S.~Mohamed, A.~Agarwal, Danielle Belgrave, K.~Cho, and A.~Oh, editors, \emph{Advances in Neural Information Processing Systems 35: Annual Conference on Neural Information Processing Systems 2022, NeurIPS 2022, New Orleans, LA, USA, November 28 - December 9, 2022}, 2022.
\newblock URL \url{http://papers.nips.cc/paper\_files/paper/2022/hash/18abbeef8cfe9203fdf9053c9c4fe191-Abstract-Conference.html}.

\bibitem[He et~al.(2024{\natexlab{b}})He, Luo, Bai, Hu, Thai, Shen, Hu, Han, Huang, Zhang, Liu, Qi, Liu, and Sun]{HeLBHTSHHHZLQL024}
Chaoqun He, Renjie Luo, Yuzhuo Bai, Shengding Hu, Zhen~Leng Thai, Junhao Shen, Jinyi Hu, Xu~Han, Yujie Huang, Yuxiang Zhang, Jie Liu, Lei Qi, Zhiyuan Liu, and Maosong Sun.
\newblock Olympiadbench: {A} challenging benchmark for promoting {AGI} with olympiad-level bilingual multimodal scientific problems.
\newblock In Lun{-}Wei Ku, Andre Martins, and Vivek Srikumar, editors, \emph{Proceedings of the 62nd Annual Meeting of the Association for Computational Linguistics (Volume 1: Long Papers), {ACL} 2024, Bangkok, Thailand, August 11-16, 2024}, pages 3828--3850. Association for Computational Linguistics, 2024{\natexlab{b}}.
\newblock \doi{10.18653/V1/2024.ACL-LONG.211}.
\newblock URL \url{https://doi.org/10.18653/v1/2024.acl-long.211}.

\bibitem[Rafailov et~al.(2023)Rafailov, Sharma, Mitchell, Manning, Ermon, and Finn]{DPO}
Rafael Rafailov, Archit Sharma, Eric Mitchell, Christopher~D. Manning, Stefano Ermon, and Chelsea Finn.
\newblock Direct preference optimization: Your language model is secretly a reward model.
\newblock In Alice Oh, Tristan Naumann, Amir Globerson, Kate Saenko, Moritz Hardt, and Sergey Levine, editors, \emph{Advances in Neural Information Processing Systems 36: Annual Conference on Neural Information Processing Systems 2023, NeurIPS 2023, New Orleans, LA, USA, December 10 - 16, 2023}, 2023.
\newblock URL \url{http://papers.nips.cc/paper\_files/paper/2023/hash/a85b405ed65c6477a4fe8302b5e06ce7-Abstract-Conference.html}.

\end{thebibliography}
